
\documentclass[runningheads]{llncs}
\usepackage{graphicx}
\usepackage{subfigure}
\usepackage{amsmath,amssymb} 
\usepackage{color}
\usepackage[noend]{algpseudocode}
\usepackage{algorithmicx,algorithm}

\newcommand{\repeatthanks}{\textsuperscript{\thefootnote}}

\begin{document}
\pagestyle{headings}
\mainmatter
\def\ECCVSubNumber{5299} 

\title{Globally-Optimal Event Camera\\Motion Estimation}


\titlerunning{Globally-Optimal Event Camera Motion Estimation}

\author{Xin Peng\inst{1,2}\thanks{indicates equal contribution.} \and
Yifu Wang\inst{3}\repeatthanks \and
Ling Gao\inst{1}\repeatthanks \and Laurent Kneip\inst{1,4}}

\authorrunning{X. Peng, Y. Wang, L. Gao, L. Kneip.}

\institute{Mobile Perception Lab, SIST, ShanghaiTech University \and
Shanghai Institute of Microsystems and Information Technology,\\ Chinese Academy of Sciences \and Australian National University \and Shanghai Engineering Research Center of Intelligent Vision and Imaging \\
}


\maketitle

\begin{abstract}
Event cameras are bio-inspired sensors that perform well in HDR conditions and have high temporal resolution. However, different from traditional frame-based cameras, event cameras measure asynchronous pixel-level brightness changes and return them in a highly discretised format, hence new algorithms are needed. The present paper looks at fronto-parallel motion estimation of an event camera. The flow of the events is modeled by a general homographic warping in a space-time volume, and the objective is formulated as a maximisation of contrast within the image of unwarped events. However, in stark contrast to prior art, we derive a globally optimal solution to this generally non-convex problem, and thus remove the dependency on a good initial guess. Our algorithm relies on branch-and-bound optimisation for which we derive novel, recursive upper and lower bounds for six different contrast estimation functions. The practical validity of our approach is supported by a highly successful application to AGV motion estimation with a downward facing event camera, a challenging scenario in which the sensor experiences fronto-parallel motion in front of noisy, fast moving textures.

\keywords{Event Cameras, Motion Estimation, Contrast Maximisation, Global Optimality, Branch and Bound}

\end{abstract}


\section{Introduction}

Camera motion estimation is an important technology with many applications in automation, smart transportation, and assistive technologies. However, despite the fact that a certain level of maturity has already been reached, we keep facing challenges in scenarios with high dynamics, low texture distinctiveness, or challenging illumination conditions \cite{fuentes2015visual,cadena2016past}. Event cameras---also called dynamic vision sensors---present an interesting alternative in this regard, as they pair HDR with high temporal resolution. The potential advantages and challenges behind event-based vision are well explained by the original work of Brandli~et~al.~\cite{brandli2014240} as well as the recent survey by Gallego~et~al.~\cite{gallego2019event}.

Our work considers fronto-parallel motion estimation of an event camera. The flow of the events is hereby modelled by a general homographic warping in a space-time volume, and motion may be estimated by maximisation of contrast in the image of unwarped events~\cite{gallego2018unifying}. Various reward functions that maximise contrast have been presented and analysed in the recent works of Gallego~et~al.~\cite{gallego2019focus} and Stoffregen~and~Kleeman~\cite{stoffregen2019event1}, and successfully used for solving a variety of problems with event cameras such as optical flow~\cite{zhu2017event,gallego2018unifying,stoffregen2018simultaneous,DBLP:journals/corr/abs-1809-08625,zhu2019unsupervised,zhu2018ev}, segmentation~\cite{stoffregen2018simultaneous,stoffregen2019event,mitrokhinevent}, 3D reconstruction~\cite{rebecq2018emvs,zhu2018realtime,zhu2019unsupervised,DBLP:journals/corr/abs-1809-08625}, and motion estimation~\cite{gallego2017accurate,gallego2018unifying}. Our work focuses on the latter problem of camera motion estimation. However---different from many of the aforementioned works---we propose the first globally optimal solution to the underlying contrast maximisation problem, an important point given its generally non-convex nature.

Our detailed contributions are as follows:
\begin{itemize}
\item We solve the global maximisation of contrast functions via Branch and Bound.
\item We derive bounds for six different contrast estimation functions. The bounds are furthermore calculated recursively, which enables efficient processing.
\item We successfully apply this strategy to Autonomous Ground Vehicle (AGV) planar motion estimation with a downward facing event camera (cf. Figure \ref{fig:robot and frame}), a problem that is complicated by motion blur, challenging illumination conditions, and indistinctive, noisy textures. We prove that using an event camera can solve these challenges, hence outperforming alternatives given by regular cameras.
\end{itemize}

\begin{figure}[t]
\centering
\subfigure[AGV]
{
\includegraphics[width=0.205\textwidth]{./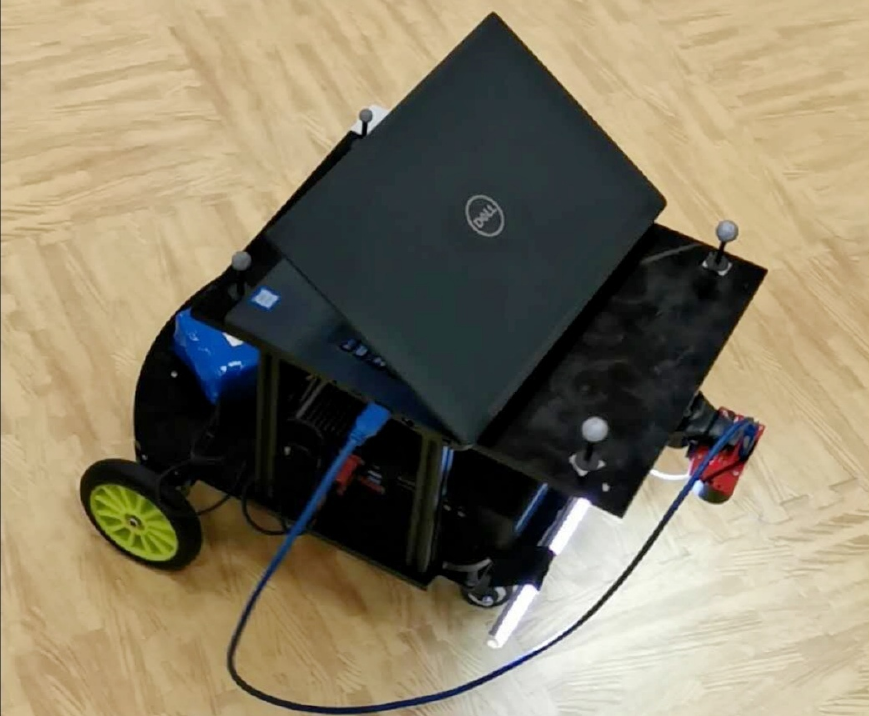}
}
\subfigure[wood grain foam]{
	\includegraphics[width=0.23\textwidth , height=0.17\textwidth]{./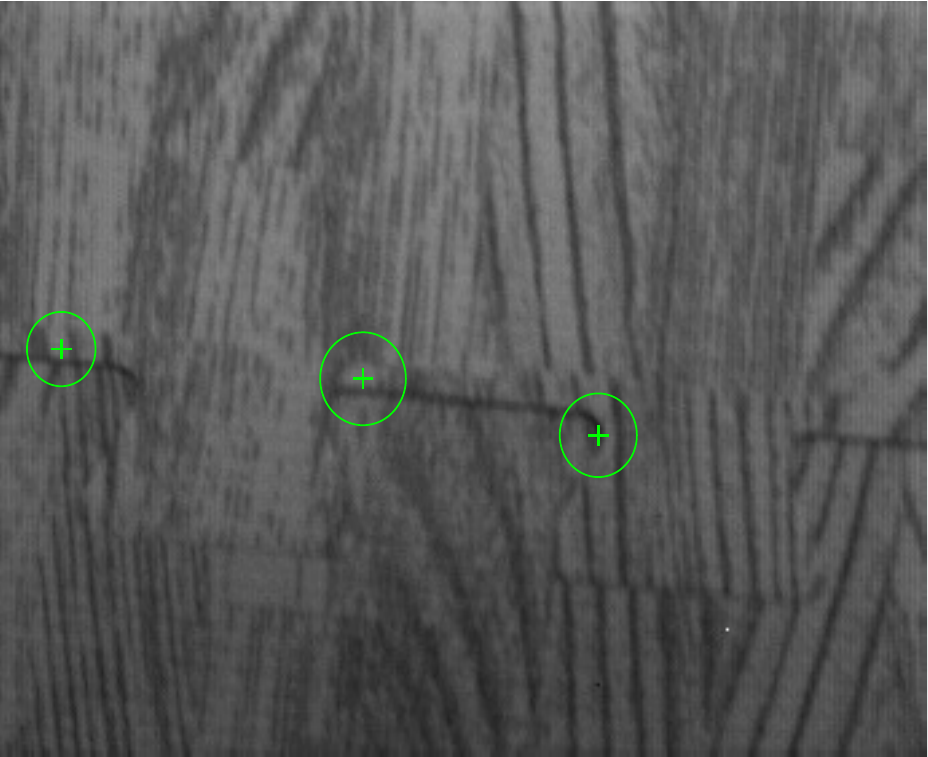} }
\subfigure[$\boldsymbol{\theta} = \mathbf{0}$]{
	\includegraphics[width=0.23\textwidth]{./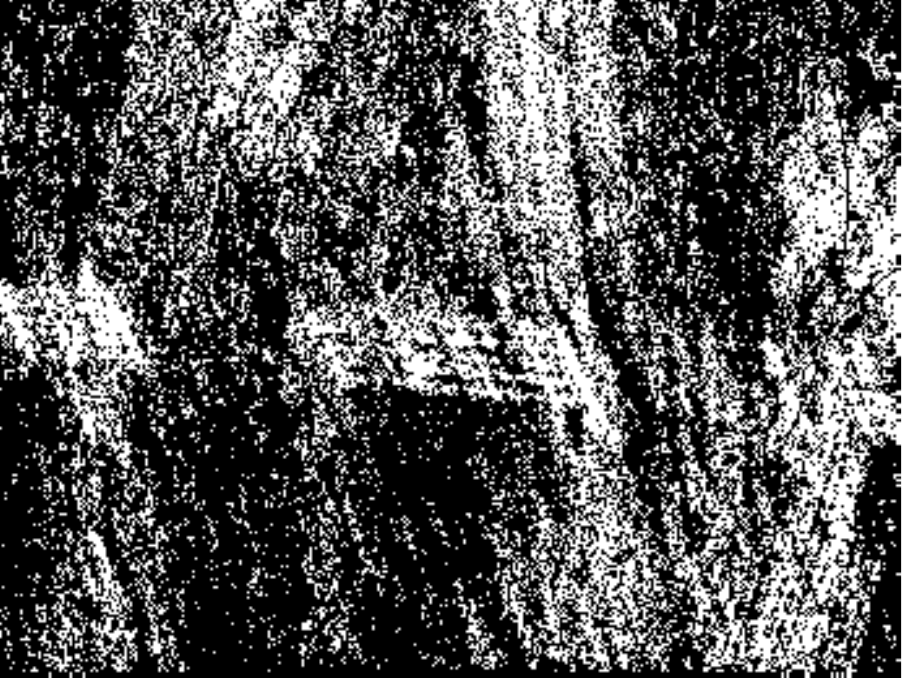}}
\subfigure[$\boldsymbol{\theta} =\hat{\boldsymbol{\theta}}$]{
	\includegraphics[width=0.23\textwidth]{./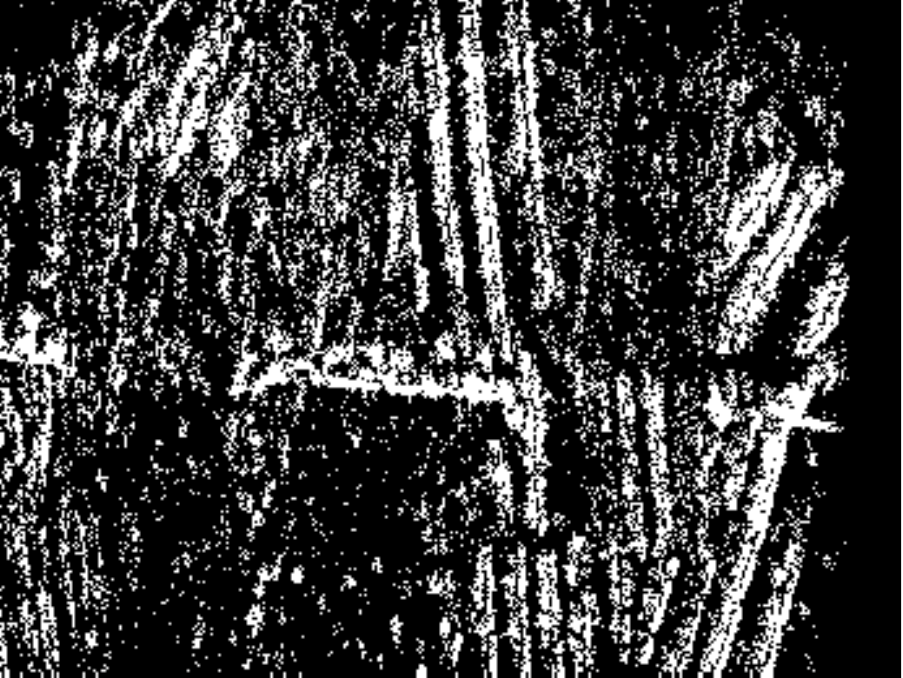}} 

\caption{(a): AGV equipped with a downward facing event camera for vehicle motion estimation. (b)-(d): collected image with detectable corners, image of warped events with $\boldsymbol{\theta} = \mathbf{0}$, and image of warped events with optimal parameters $\hat{\boldsymbol{\theta}}$.}
\label{fig:robot and frame}
\end{figure}


\section{Contrast Maximisation}

Gallego~et~al.~\cite{gallego2018unifying} recently introduced contrast maximisation as a unifying framework allowing the solution of several important problems for dynamic vision sensors, in particular motion estimation problems in which the effect of camera motion may be described by a homography (e.g. motion in front of a plane, pure rotation). Our work relies on contrast maximisation, which we therefore briefly review in the following.

An event camera outputs a sequence of \textit{events} denoting temporal logarithmic brightness changes above a certain threshold. An event $e = \left\{ \mathbf{x},\ t,\ s \right\}$ is described by its pixel position $\mathrm{\bold{x}} = [x\text{ }y]^T$, timestamp $t$, and polarity $s$ (the latter indicates whether the brightness is increasing or decreasing, and is ignored in the present work). The core idea of contrast maximisation is relatively straightforward: The flow of the events is modelled by a time-parametrised homography. Given its position and time-stamp, every event may therefore be warped back along a point-trajectory into a reference view with timestamp $t_{\mathrm{ref}}$. Since events are more likely to be generated by high-gradient edges, correct homographic warping parameters will likely lead to a sharp Image of Warped Events (IWE) in which events align along a crisp edge-map. Gallego~et~al.~\cite{gallego2018unifying} simply propose to consider the contrast of the IWE as a reward function to identify the correct homographic warping parameters. Note that homographic warping functions include 2D affine and Euclidean transformations, and thus can be used in a variety of vision problems such as optical flow, feature tracking, or fronto-parallel motion estimation.
 
Suppose we are given a set of $N$ events $\mathcal{E} = \{e_{k}\}_{k=1}^{N}$. We define a general warping function ${\mathrm{\bold{x}}_{k}^{\prime}} = W(\mathrm{\bold{x}}_k,t_{k};\boldsymbol{\theta})$ that returns the position ${\mathrm{\bold{x}}_{k}^{\prime}}$ of an event $e_k$ in the reference view at time $t_{\mathrm{ref}}$.  $\boldsymbol{\theta}$ is a vector of warping parameters. The IWE is generated by accumulating warped events at each discrete pixel location:
\begin{equation}
	I(\mathbf{p}_{ij};\boldsymbol{\theta}) = \sum_{k = 1}^{N}\mathbf{1}(\mathbf{p}_{ij}-{\mathrm{\bold{x}}_{k}^{\prime}}) =
	\sum_{k = 1}^{N}\mathbf{1}(\mathbf{p}_{ij}-W(\mathrm{\bold{x}}_k,t_{k};\boldsymbol{\theta})) \,,
	\label{equ:pixel intensity}
\end{equation}
where $\mathbf{1}(\cdot)$ is an indicator function that counts 1 if the absolute value of $(\mathbf{p}_{ij} - {\mathrm{\bold{x}}_{k}^{\prime}})$ is less than a threshold $\epsilon$ in each coordinate, and otherwise 0. $\mathbf{p}_{ij}$ is a pixel in the IWE with coordinates $[i\text{ }j]^T$, and we refer to it as an \textit{accumulator} location. We set $\epsilon = 0.5$ such that each warped event will increment one accumulator only.

Existing approaches replace the indicator function with a Gaussian kernel to make the IWE a smooth function of the warped events, and thus solve contrast maximisation problems via local optimisation methods (cf. \cite{gallego2017accurate,gallego2018unifying,gallego2019focus}).
In contrast, we show how our proposed method is able to find the global optimum of the above, discrete objective function.

As introduced in \cite{stoffregen2019event1,gallego2019focus}, reward functions for event un-warping all rely on the idea of maximising the contrast or sharpness of the IWE (they have also been denoted as \textit{focus loss functions}). They proceed by integration over the entire set of accumulators, which we denote $\mathcal{P}$. The most relevant ones for us are summarized in Table~\ref{tab:contrast_functions}. Note that for $L_{\mathrm{Var}}$, $\mu_{I}$ is the mean value of $I(\mathbf{p}_{ij};\boldsymbol{\theta})$ over all pixels (a function of $\boldsymbol{\theta}$ itself), and $N_{p}$ is the total number of accumulators in $I$. For $L_{\mathrm{SoSA}}$, $\delta$ is a design parameter called the \textit{shift factor}. Different from other objectives functions, locations with few accumulations will contribute more to $L_{\mathrm{SoSA}}$. The intuition here is that more empty locations again mean more events that are concentrated at fewer accumulators. $L_{\mathrm{SoEaS}}$ is a combination of $L_{\mathrm{SoS}}$ and $L_{\mathrm{SoE}}$. Similarly, $L_{\mathrm{SoSAaS}}$ is a combination of $L_{\mathrm{SoS}}$ and $L_{\mathrm{SoSA}}$.

\begin{table}[t]
\centering
\caption{Contrast functions evaluated in this work}
\label{tab:contrast_functions}
\renewcommand\arraystretch{1.5}
\begin{tabular}{|l|l|}
\hline
\ Sum of Squares (SoS) \ &
	\ $L_{\mathrm{SoS}}(\boldsymbol{\theta}) = \sum_{\mathbf{p}_{ij}\in\mathcal{P}}I(\mathbf{p}_{ij};\boldsymbol{\theta})^2$ \ 
\\ \hline
\ Variance (Var) \ &
	 \ $L_{\mathrm{Var}}(\boldsymbol{\theta}) = \frac{1}{N_{p}}\sum_{\mathbf{p}_{ij}\in\mathcal{P}}(I(\mathbf{p}_{ij};\boldsymbol{\theta})-\mu_{I})^2$ \ 
\\ \hline
\ Sum of Exponentials (SoE) \ &
	 \ $L_{\mathrm{SoE}}(\boldsymbol{\theta}) = \sum_{\mathbf{p}_{ij}\in\mathcal{P}}e^{I(\mathbf{p}_{ij};\boldsymbol{\theta})}$ \ 
\\ \hline
\ Sum of Suppressed \ &
	 \ $L_{\mathrm{SoSA}}(\boldsymbol{\theta}) = \sum_{\mathbf{p}_{ij}\in\mathcal{P}}e^{-I(\mathbf{p}_{ij};\boldsymbol{\theta})\cdot\delta}$ \ 
\\ 
\ Accumulations (SoSA) \ & \\
\hline
\ SoE and Squares (SoEaS) \ &
	 \ $L_{\mathrm{SoEaS}}(\boldsymbol{\theta}) = \sum_{\mathbf{p}_{ij}\in\mathcal{P}}I(\mathbf{p}_{ij};\boldsymbol{\theta})^2+e^{I(\mathbf{p}_{ij};\boldsymbol{\theta})}$ \ 
\\ \hline
\ SoSA and Squares (SoSAaS) \ &
	 \ $L_{\mathrm{SoSAaS}}(\boldsymbol{\theta}) = \sum_{\mathbf{p}_{ij}\in\mathcal{P}}I(\mathbf{p}_{ij};\boldsymbol{\theta})^2+e^{-I(\mathbf{p}_{ij};\boldsymbol{\theta})\cdot\delta}$ \ 
\\ \hline
\end{tabular}
\end{table}

Let us now proceed to the main contribution of our work, which is a derivation of bounds on the above objectives as required by Branch and Bound.


\section{Globally Maximised Contrast using Branch and Bound}

Figure~\ref{fig:heat map} illustrates how contrast maximisation for motion estimation is in general a non-convex problem, meaning that local optimisation may be sensitive to the initial parameters and not find the global optimum. We tackle this problem by introducing a globally optimal solution to contrast maximisation using Branch and Bound (BnB) optimisation. BnB is an algorithmic paradigm in which the solution space is subdivided into branches in which we then find upper and lower bounds for the maximal objective value. The globally optimal solution is isolated by an iterative search in which entire branches are discarded if their upper bound for the maximum objective value remains lower than the corresponding lower bound in another branch. The most important factor deciding the effectiveness of this approach is given by the tightness of the bounds. 

Our core contribution is given by a recursive method to efficiently calculate upper and lower bounds for the maximum value of a contrast maximisation function over a given branch. In short, the main idea is given by expressing a bound over $(N+1)$ events as a function of the bound over $N$ events plus the contribution of one additional event. The strategy can be similarly applied to all six aforementioned contrast functions, which is why we limit the exposition to the derivation of bounds for $L_{\mathrm{SoS}}$. Detailed derivations for all loss functions are provided in the supplementary material.

\begin{figure}[t]
\centering
\subfigure[N/E = 0]{
\includegraphics[width=0.22\textwidth]{./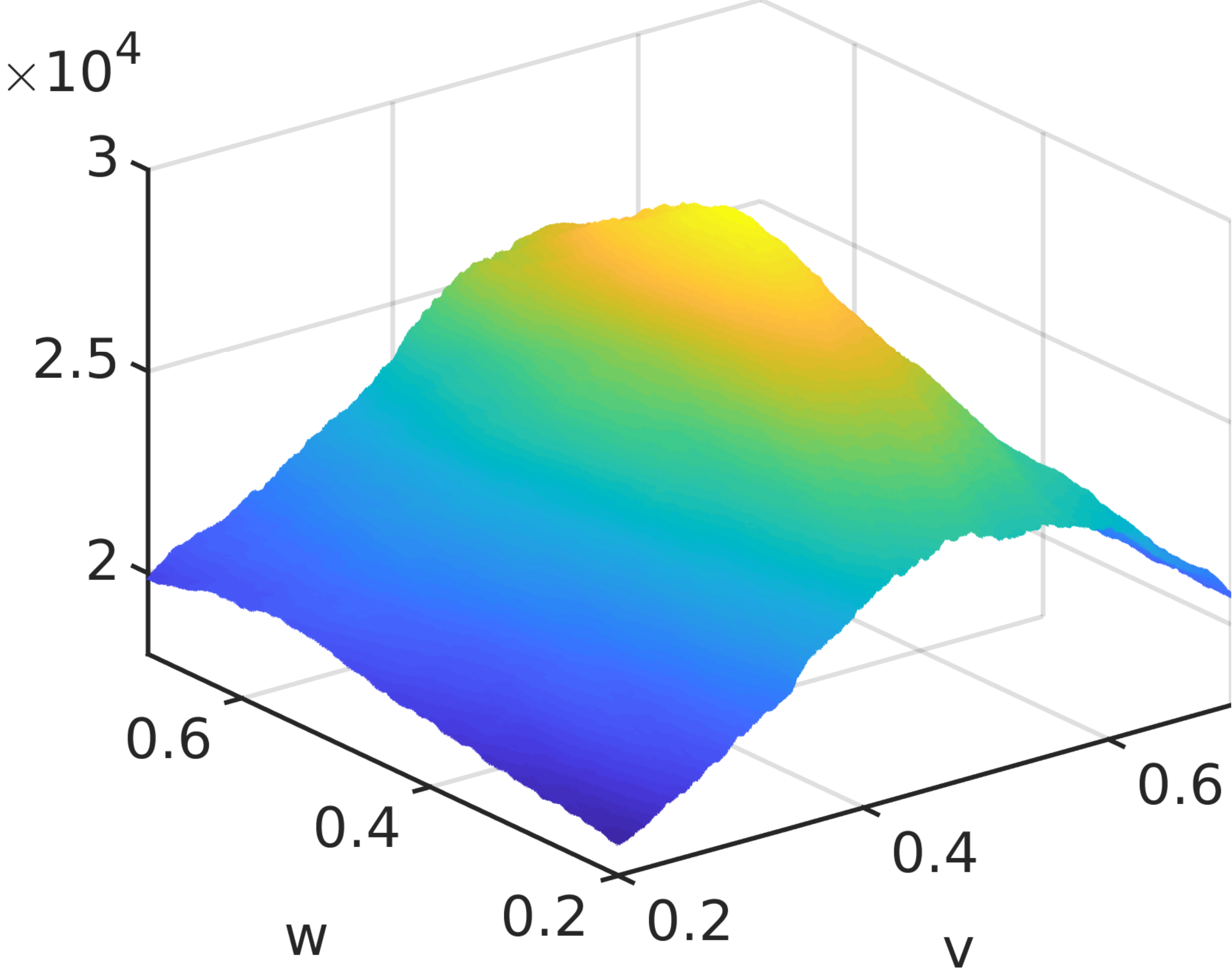} 
\label{fig:NtoE_0}
}
\subfigure[N/E = 0.02]{
\includegraphics[width=0.22\textwidth]{./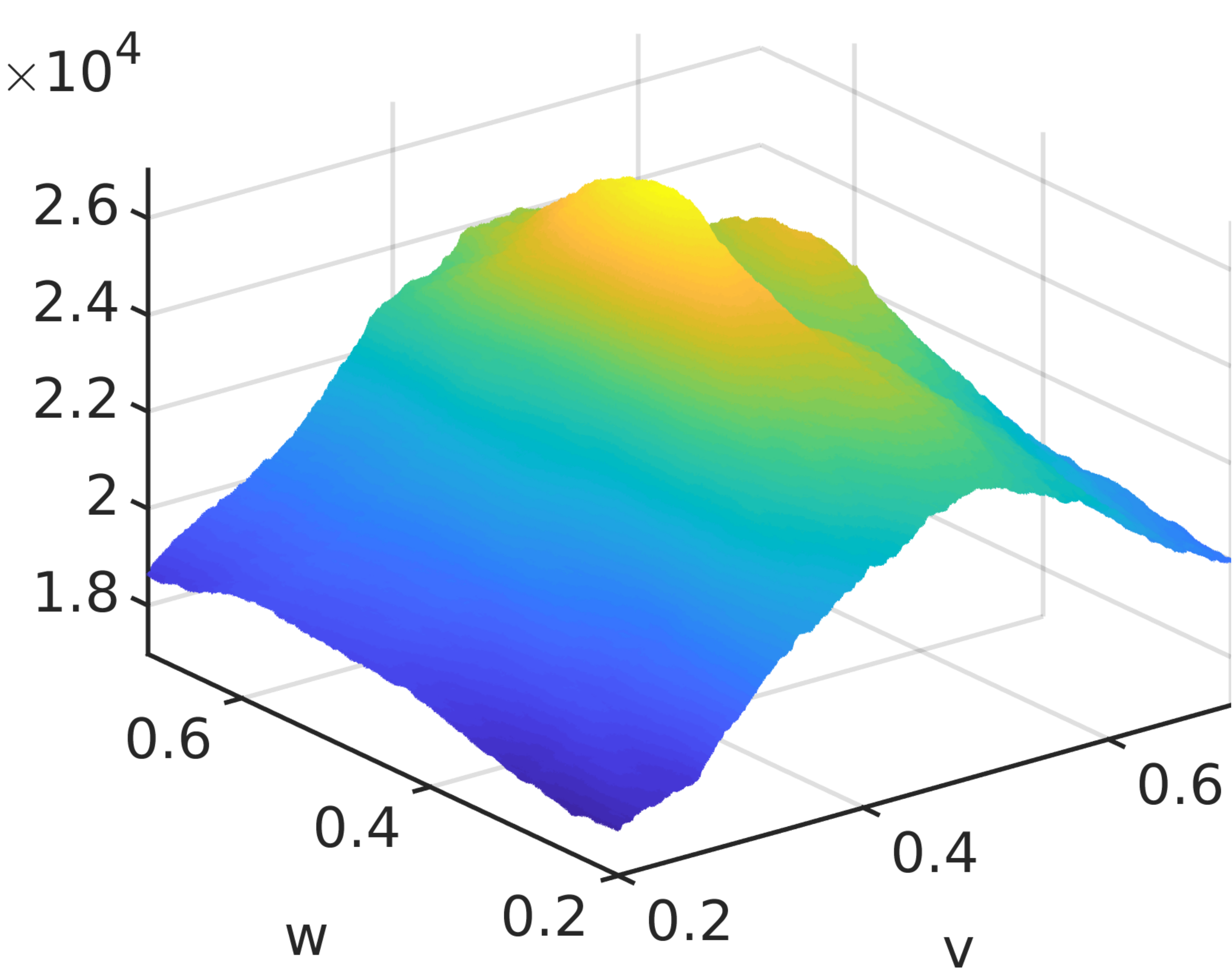}
\label{fig:NtoE_002}
}
\subfigure[N/E = 0.10]{
\includegraphics[width=0.22\textwidth]{./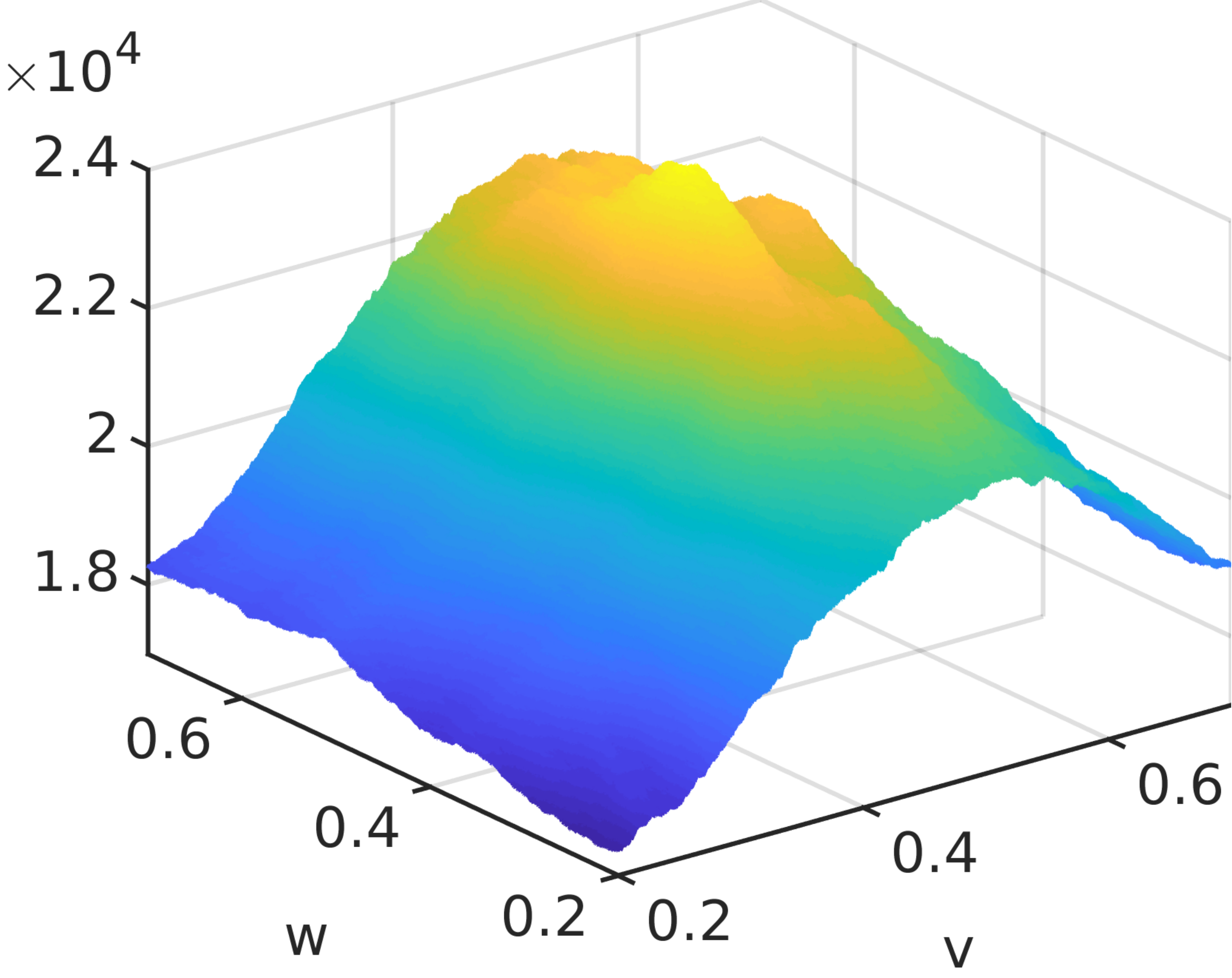}
\label{fig:NtoE_01}
}
\subfigure[N/E = 0.18]{
\includegraphics[width=0.22\textwidth]{./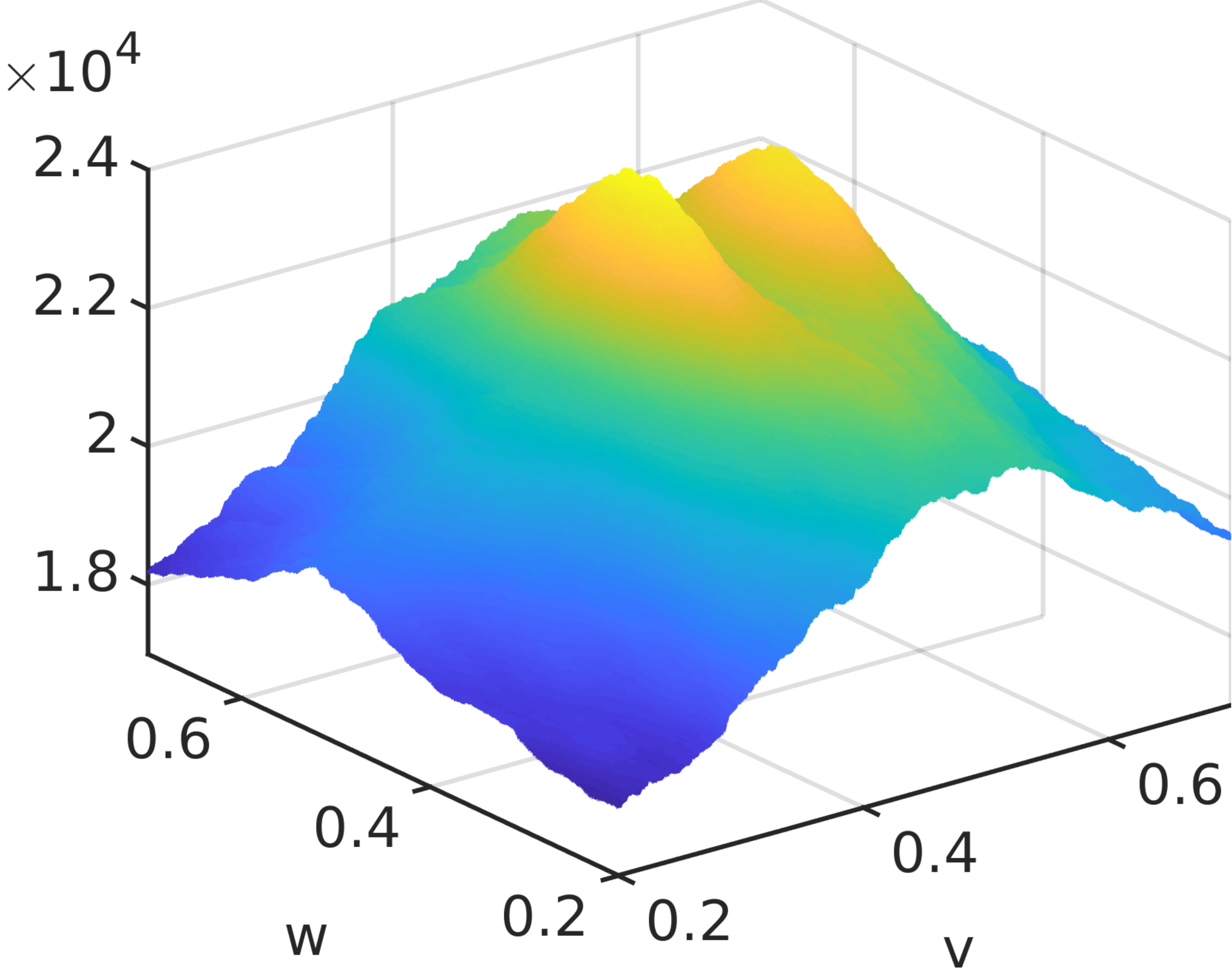} 
\label{fig:NtoE_018}
}
\caption{Visualization of the Sum of Squares contrast function. The camera is moving in front of a plane, and the motion parameters are given by translational and rotational velocity (cf. Section \ref{sec:casestudy}). The sub-figures from left to right are functions with increasing Noise-to-Events (N/E) ratios. Note that contrast functions are non-convex.}
\label{fig:heat map}
\end{figure}

\subsection{Objective Function}

In the following, we assume that $L=L_{\mathrm{SoS}}$. The maximum objective function value over all $N$ events in a given time interval $[t_{\mathrm{ref}}, t_{\mathrm{ref}}+\Delta T]$ is given by
\begin{equation}
	L_{N} = \max_{\substack{\boldsymbol{\theta}\in\boldsymbol{\Theta}}} \sum_{\textbf{p}_{ij}\in\mathcal{P}} \left[ \sum_{k = 1}^{N}\boldsymbol{1} \left( \textbf{p}_{ij}-W(\mathrm{\bold{x}}_k,t_{k};\boldsymbol{\theta}) \right) \right]^2 \,,
\end{equation}
where $\boldsymbol{\Theta}$ is the search space (i.e. branch or sub-branch) over which we want to maximise the objective. Most globally optimal methods for geometric computer vision problems find bounds by a spatial division of the problem into individual, simpler maximisation sub-problems  (cf. \cite{campbell2017globally}). However, the contrast maximisation objective is related to the distribution over the entire IWE and not just individual accumulators, which complicates this strategy. 

\subsection{Upper and Lower Bound}

The bounds are calculated recursively by processing the events and one-by-one, each time updating the IWE. The event are notably processed in temporal order with increasing timestamps.

For the lower bound, it is readily given by evaluating the contrast function at an arbitrary point on the interval $\boldsymbol{\Theta}$, which is commonly picked as the interval center $\boldsymbol{\theta}_{0}$. We present a recursive rule to efficiently evaluate the lower bound.

\noindent \textbf{Theorem 1.} \textit{For search space $\boldsymbol{\Theta}$ centered at $\boldsymbol{\theta}_{0}$, the lower bound of $SoS$-based contrast maximisation may be given by}
\begin{equation}
	\underline{L_{N+1}} = \underline{L_{N}}+1+2 I^{N}(\boldsymbol{\eta}_{N+1}^{\theta_0};\boldsymbol{\theta}_{0}) \,,
\label{equ:lower bound}
\end{equation}
\textit{where $I^N(\mathbf{p}_{ij};\boldsymbol{\theta}_{0})$ is the incrementally constructed IWE, its exponent $N$ denotes the number of events that have already been taken into account, and}
\begin{equation}
	\boldsymbol{\eta}_{N+1}^{\theta_0} = \operatorname{round}(W(\mathrm{\bold{x}}_{N+1},t_{N+1};\boldsymbol{\theta}_{0}))
\end{equation}
\textit{returns the accumulator closest to the warped position of the $(N+1)$-th event.}
\begin{proof}
    According to the definition of sum of the square focus loss function, 
\begin{equation}
\begin{split}
	\underline{L_{N+1}} &=  \sum_{\textbf{p}_{ij}\in\mathcal{P}} \left[ \sum_{k = 1}^{N+1}\boldsymbol{1} \left( \textbf{p}_{ij}-W(\mathrm{\bold{x}}_k,t_{k};\boldsymbol{\theta_0}) \right) \right]^2 \\
	&= \sum_{\mathbf{p}_{ij} \in \mathcal{P}} 
    		   \left[ I^{N}(\mathbf{p}_{ij};\boldsymbol{\theta}_0)
    		          + \boldsymbol{1} \left( \mathbf{p}_{ij}-W(\mathrm{\bold{x}}_{N+1},t_{N+1};\boldsymbol{\theta_0}) \right) \right]^2 \\
	        &= a+b+c \, \text{, where}
\end{split}
\end{equation}
\begin{equation*}
\begin{split}
    & a =  \sum_{\mathbf{p}_{ij} \in \mathcal{P}} I^{N}(\mathbf{p}_{ij};\boldsymbol{\theta}_0)^2 \,, \\ 
    & b =  2\sum_{\mathbf{p}_{ij}\in\mathcal{P}}
    	  \left[ \boldsymbol{1}(\mathbf{p}_{ij}-W(\mathrm{\bold{x}}_{N+1},t_{N+1};\boldsymbol{\theta}_0)) I^{N}(\mathbf{p}_{ij};\boldsymbol{\theta}_0) \right] \,, \\
    & c =  \sum_{\mathbf{p}_{ij}\in\mathcal{P}}
    	  \left[ \boldsymbol{1}(\mathbf{p}_{ij}-W(\mathrm{\bold{x}}_{N+1},t_{N+1};\boldsymbol{\theta}_0)) \right]^2
    	  \,.
\end{split}
\end{equation*}
It is clear that $a = \underline{L_{N}}$. In $c$, owing to the definition of our indicator function, only the $\mathbf{p}_{ij}$ which is closest to $W(\mathrm{\bold{x}}_{N+1},t_{N+1};\boldsymbol{\theta}_0)$ makes a contribution, thus we have $c = 1$. For $b$, the term $\boldsymbol{1}(\mathbf{p}_{ij}-W(\mathrm{\bold{x}}_{N+1},t_{N+1};\boldsymbol{\theta}_0))$ is simply zero unless we are considering an accumulator $\mathbf{p}_{ij} = \boldsymbol{\eta}_{N+1}^{\theta_0}$, which gives $b = 2I^{N}(\boldsymbol{\eta}_{N+1}^{\theta_0};\boldsymbol{\theta}_{0})$. Thus we obtain (\ref{equ:lower bound}). Note that the IWE is iteratively updated by incrementing the accumulator which locates closest to $ \boldsymbol{\eta}_{N+1}^{\theta_0}$.
\end{proof}

\begin{figure*}[t]
    \centering
    \subfigure[]
    {
    \includegraphics[width=0.68\textwidth]{./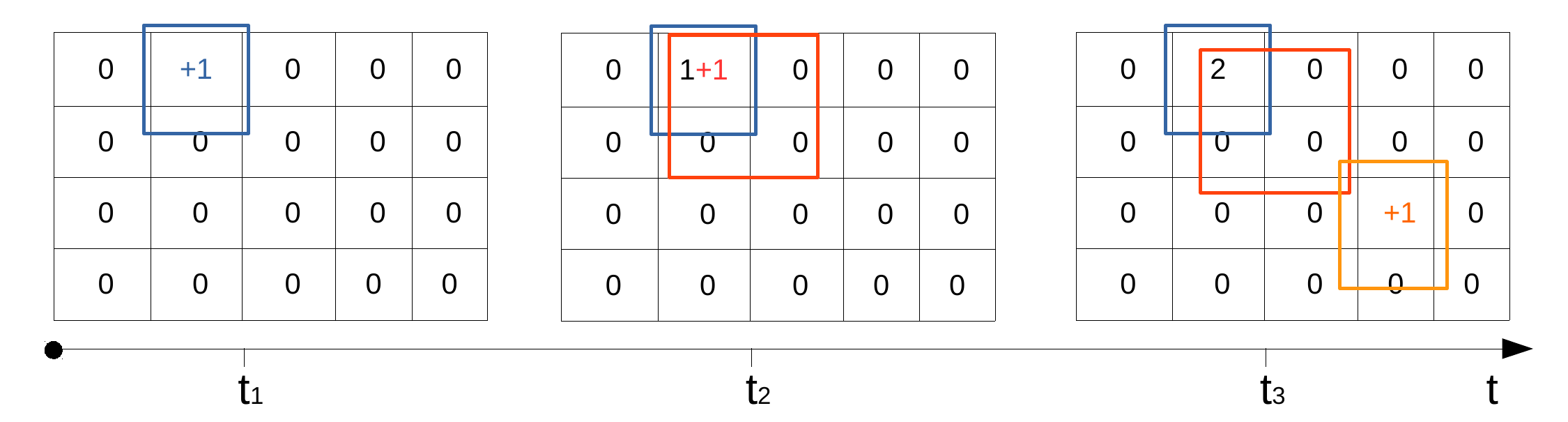}
    \label{fig:iweConstruction}
    }
    \subfigure[]
    {
    \includegraphics[width = 0.27\textwidth, height = 0.2\textwidth]{./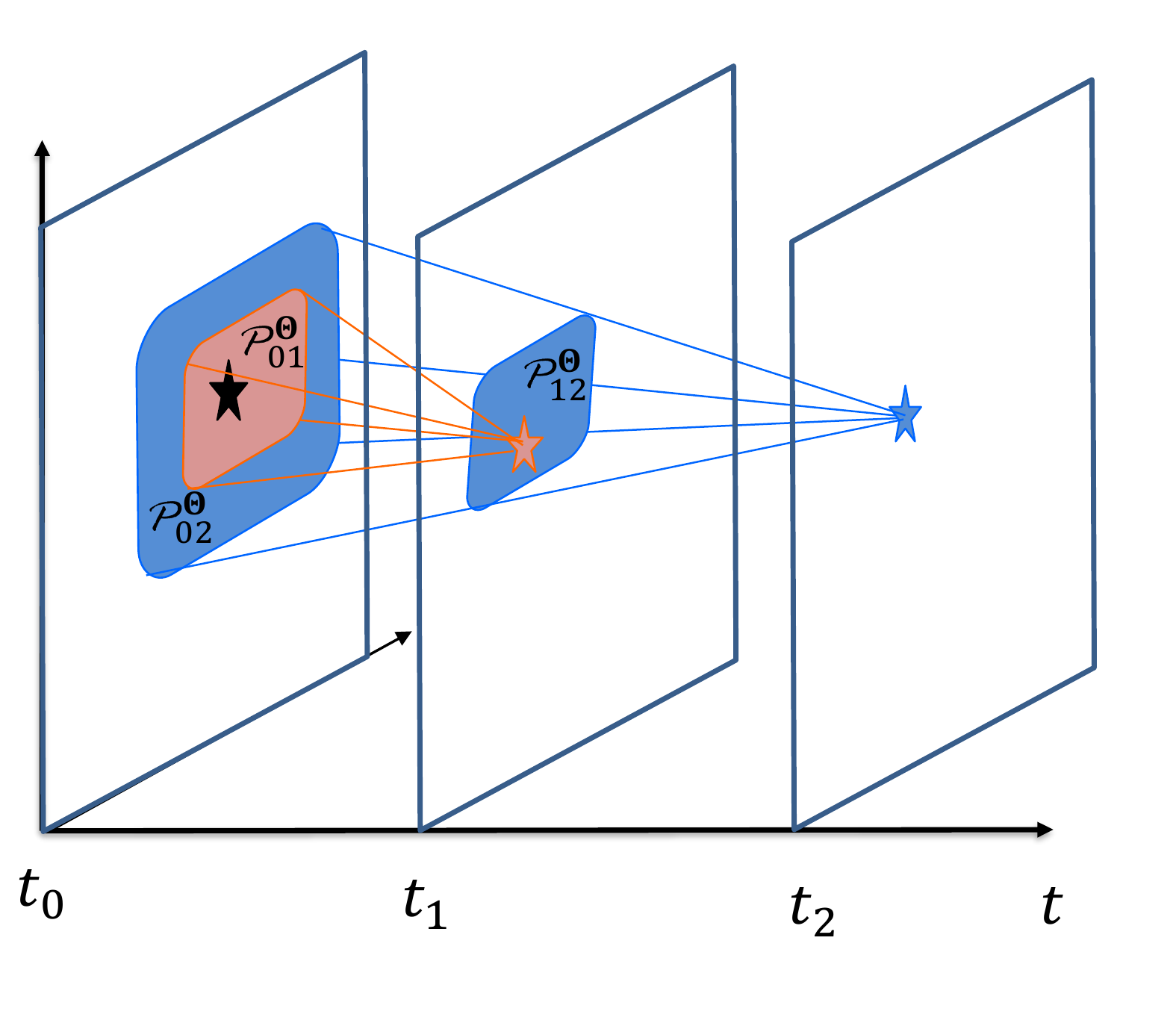}
    \label{fig:nested_boundingbox}
    }
    \caption{(a) Incremental update of the IWE. For each new event $e$, we choose and increment the currently maximal accumulator in the bounding box $\mathcal{P}^{\boldsymbol{\Theta}}$ around all possible locations $W(\mathrm{\bold{x}},t;\boldsymbol{\theta}\in\boldsymbol{\Theta})$. We simply increment the center of the bounding box if no other accumulator exists. (b) Bounding boxes of two temporally distinct events generated by the same point in 3D.}
\end{figure*}

We now proceed to our main contribution, a recursive upper bound for the contrast maximisation problem. Let us define $\mathcal{P}^{\boldsymbol{\Theta}}_{i}$ as the bounding box around all possible locations $W(\mathrm{\bold{x}}_i,t_i;\boldsymbol{\theta}\in\boldsymbol{\Theta})$ of the un-warped event. Lemma~\ref{lemma:nest_boundingbox} is introduced as follows.

\begin{lemma}
Given a search space $\boldsymbol{\theta} \in \boldsymbol{\Theta}$, for a small enough time interval, if $W(\mathrm{\bold{x}}_i,t_i;\boldsymbol{\theta})$ = $W(\mathrm{\bold{x}}_j,t_j;\boldsymbol{\theta})$ and $0 < i < j \leq N$, we have $\mathcal{P}^{\boldsymbol{\Theta}}_{i} \subseteq \mathcal{P}^{\boldsymbol{\Theta}}_{j}$. An intuitive explanation is given in Figure \ref{fig:nested_boundingbox}.
\label{lemma:nest_boundingbox}
\end{lemma}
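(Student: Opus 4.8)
\medskip
\noindent\textbf{Proof plan.}
The plan is to reduce the set inclusion to a one-dimensional statement about intervals, and then to use that, at the parameter where the two events coincide, the set of reachable positions of the \emph{later} event is, to leading order, a dilation of that of the \emph{earlier} event about the common point, with dilation factor at least one.

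First I would observe that $\mathcal{P}^{\boldsymbol{\Theta}}_{i}$ and $\mathcal{P}^{\boldsymbol{\Theta}}_{j}$ are axis-aligned boxes, so $\mathcal{P}^{\boldsymbol{\Theta}}_{i}\subseteq\mathcal{P}^{\boldsymbol{\Theta}}_{j}$ holds if and only if, for each image coordinate $c$, the interval spanned by $\{W(\mathbf{x}_{i},t_{i};\boldsymbol{\theta})_{c}:\boldsymbol{\theta}\in\boldsymbol{\Theta}\}$ is contained in the interval spanned by $\{W(\mathbf{x}_{j},t_{j};\boldsymbol{\theta})_{c}:\boldsymbol{\theta}\in\boldsymbol{\Theta}\}$. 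Let $\boldsymbol{\theta}^{\star}\in\boldsymbol{\Theta}$ be a parameter provided by the hypothesis, i.e. with $W(\mathbf{x}_{i},t_{i};\boldsymbol{\theta}^{\star})=W(\mathbf{x}_{j},t_{j};\boldsymbol{\theta}^{\star})=:\mathbf{q}$; geometrically $e_{i}$ and $e_{j}$ then lie on a single point-trajectory of the motion $\boldsymbol{\theta}^{\star}$, and $\mathbf{q}\in\mathcal{P}^{\boldsymbol{\Theta}}_{i}\cap\mathcal{P}^{\boldsymbol{\Theta}}_{j}$, so each coordinate interval already contains $\mathbf{q}_{c}$. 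It therefore only remains to show that the interval of event $i$ does not protrude past the interval of event $j$ on either side.

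The core step is a comparison of $g_{k}(\boldsymbol{\theta}):=W(\mathbf{x}_{k},t_{k};\boldsymbol{\theta})$ for $k\in\{i,j\}$ near $\boldsymbol{\theta}^{\star}$. Writing $g_{k}(\boldsymbol{\theta})-\mathbf{q}=g_{k}(\boldsymbol{\theta})-g_{k}(\boldsymbol{\theta}^{\star})$ and expanding in $\boldsymbol{\theta}$, the Jacobian $\partial_{\boldsymbol{\theta}}W(\mathbf{x},t;\boldsymbol{\theta}^{\star})$ of the homographic warp is, to leading order, proportional to the elapsed time $(t-t_{\mathrm{ref}})$ --- this is where the flow structure enters, since warping from $t_{j}$ back to $t_{\mathrm{ref}}$ factors through the warp from $t_{i}$, and at $\boldsymbol{\theta}^{\star}$ it carries the pixel $\mathbf{x}_{j}$ exactly onto $\mathbf{x}_{i}$ --- and for a short interval $\mathbf{x}_{i}$ and $\mathbf{x}_{j}$ are $O(\Delta T)$ apart, so the two warps may be differentiated at essentially the same pixel. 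This yields $g_{j}(\boldsymbol{\theta})-\mathbf{q}=\lambda\,(g_{i}(\boldsymbol{\theta})-\mathbf{q})+R(\boldsymbol{\theta})$ with $\lambda=(t_{j}-t_{\mathrm{ref}})/(t_{i}-t_{\mathrm{ref}})\ge 1$ (using $i<j$, hence $t_{i}\le t_{j}$) and a remainder $R$ of higher order in $\Delta T$. Ignoring $R$, the set $\{g_{j}(\boldsymbol{\theta}):\boldsymbol{\theta}\in\boldsymbol{\Theta}\}$ is the image of $\{g_{i}(\boldsymbol{\theta}):\boldsymbol{\theta}\in\boldsymbol{\Theta}\}$ under the dilation of ratio $\lambda$ centred at $\mathbf{q}\in\mathcal{P}^{\boldsymbol{\Theta}}_{i}$; since a dilation by a factor $\ge 1$ about an interior point maps an axis-aligned box to a box that contains it, and the bounding box of a set maps to the bounding box of the image, we obtain $\mathcal{P}^{\boldsymbol{\Theta}}_{i}\subseteq\mathcal{P}^{\boldsymbol{\Theta}}_{j}$.

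The main obstacle is making the previous paragraph rigorous, i.e. controlling the remainder $R$. I expect to need a Lipschitz bound on the instantaneous flow in its pixel argument, so that replacing $\mathbf{x}_{j}$ by $\mathbf{x}_{i}$ costs only $O(\|\mathbf{x}_{i}-\mathbf{x}_{j}\|)=O(\Delta T)$; a bound on the second-order time term of the warp (the deviation from exact proportionality to $t-t_{\mathrm{ref}}$), which is $O(\Delta T^{2})$; and an argument that these terms are dominated by the available ``dilation room'' $(\lambda-1)$, possibly together with a mild non-degeneracy of $W$ over $\boldsymbol{\Theta}$ ensuring that the coordinate extrema of $g_{i}$ are not attained exactly at $\boldsymbol{\theta}^{\star}$. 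Quantifying how small $\Delta T$ must be for this --- in terms of the homography parametrisation and the extent of $\boldsymbol{\Theta}$ --- is the technical heart of the lemma, and is precisely what the clause ``for a small enough time interval'' encodes.
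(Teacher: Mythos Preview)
The paper does not actually prove Lemma~1. The statement ends with ``An intuitive explanation is given in Figure~\ref{fig:nested_boundingbox}'', and that figure merely sketches two nested bounding boxes for ``two temporally distinct events generated by the same point in 3D''. No argument---formal or informal---is offered in the text; the lemma is then immediately invoked in the proof of Theorem~2.

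Your proposal therefore already goes considerably further than the paper. The dilation mechanism you isolate---that $\partial_{\boldsymbol{\theta}}W(\mathbf{x},t;\boldsymbol{\theta})$ is, to leading order, proportional to the elapsed time $(t-t_{\mathrm{ref}})$, so that the reachable set of the later event is approximately the image of the earlier one under a scaling of ratio $(t_{j}-t_{\mathrm{ref}})/(t_{i}-t_{\mathrm{ref}})\ge 1$ about the common point $\mathbf{q}$---is exactly what the figure is meant to convey, and your reduction to coordinate-wise interval containment is the right translation to axis-aligned boxes. For the concrete Ackermann warp of Section~4 one can check directly that the leading-order Jacobian in $\boldsymbol{\theta}=(\omega,v)$ is linear in $t$ with coefficients depending smoothly on the pixel, so your Lipschitz-in-pixel plus second-order-in-time remainder budget is the natural route to make the ``small enough time interval'' clause quantitative. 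Two small points you may want to tighten: the dilation-about-$\mathbf{q}$ containment holds even when $\mathbf{q}$ is on the boundary of $\mathcal{P}^{\boldsymbol{\Theta}}_{i}$, so the non-degeneracy caveat about extrema not being attained at $\boldsymbol{\theta}^{\star}$ is not needed there; and in the paper's actual use of the lemma (proof of Theorem~2) the coincidence hypothesis is only equality of the \emph{rounded} accumulators $\boldsymbol{\eta}_{k}^{\hat{\theta}}=\boldsymbol{\eta}_{N+1}^{\hat{\theta}}$, so a fully rigorous version would also have to absorb an $O(1)$-pixel slop into the remainder.
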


Lemma~\ref{lemma:nest_boundingbox} now enables us to derive our recursive upper bound.

\begin{theorem}
The upper bound of the objective function $L_N$ for SoS-based contrast maximisation satisfies
\begin{eqnarray}
     L_{N+1} &=& L_{N}+1+2I^{N}(\boldsymbol{\eta}_{N+1}^{\hat{\theta}};\hat{\boldsymbol{\theta}}) \label{eq:opt_recursive_formulation}\\
    & \leq& \overline{L_{N}}+1+2Q^N = \overline{L_{N+1}},  \  \label{eq:upper_bound} \\  
   \text{where } Q^N  & =& \max_{\substack{\mathbf{p}_{ij}\in\mathcal{P}_{N+1}^{\boldsymbol{\Theta}}}} \overline{I}^{N}(\mathbf{p}_{ij}) \geq I^{N}(\boldsymbol{\eta}_{N+1}^{\hat{\theta}};\hat{\boldsymbol{\theta}}) \nonumber \,
\end{eqnarray}
$\mathcal{P}_{N+1}^{\boldsymbol{\Theta}}$ is a bounding box for the $(N+1)$-th event. $\hat{\boldsymbol{\theta}}$ is the optimal parameter set that maximises $L_{N+1}$ over  the interval $\boldsymbol{\Theta}$. $\overline{I}^{N}(\mathbf{p}_{ij})$ is the value of pixel $\mathbf{p}_{ij}$ in the upper bound IWE, a recursively constructed image in which we always increment the maximum accumulator within the bounding box $\mathcal{P}_{N+1}^{\boldsymbol{\Theta}}$ (i.e. the one that we used to define the value of $Q^N$. The incremental construction of $\overline{I}^{N}(\mathbf{p}_{ij})$ is illustrated in Figure~\ref{fig:iweConstruction}.
\label{the:upper bound}
\end{theorem}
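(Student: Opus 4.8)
The plan is to reuse the three-term split from the proof of Theorem~1, but evaluated at the maximiser $\hat{\boldsymbol{\theta}}\in\boldsymbol{\Theta}$ of $L_{N+1}$ rather than at the interval centre, and then to bound each term from above. Concretely, I would pick $\hat{\boldsymbol{\theta}}$ attaining $L_{N+1}$ and peel off the $(N+1)$-th event,
\begin{equation*}
  L_{N+1}=\sum_{\mathbf{p}_{ij}\in\mathcal{P}}\Bigl[\,I^{N}(\mathbf{p}_{ij};\hat{\boldsymbol{\theta}})+\boldsymbol{1}\bigl(\mathbf{p}_{ij}-W(\mathrm{\bold{x}}_{N+1},t_{N+1};\hat{\boldsymbol{\theta}})\bigr)\Bigr]^{2}=a+b+c,
\end{equation*}
with $a=\sum_{\mathbf{p}_{ij}}I^{N}(\mathbf{p}_{ij};\hat{\boldsymbol{\theta}})^{2}$, $b=2\,I^{N}(\boldsymbol{\eta}_{N+1}^{\hat{\theta}};\hat{\boldsymbol{\theta}})$ and $c=1$, the last two collapsing onto the single accumulator $\boldsymbol{\eta}_{N+1}^{\hat{\theta}}$ by the indicator argument used in Theorem~1 (this is the content of~(\ref{eq:opt_recursive_formulation}), read with $a\le L_{N}$ rather than $a=L_{N}$, since $\hat{\boldsymbol{\theta}}$ need not maximise the $N$-event objective). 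The term $c=1$ is immediate. For $a$, the image $I^{N}(\cdot;\hat{\boldsymbol{\theta}})$ is simply the $N$-event IWE at a feasible parameter, so $a\le\max_{\boldsymbol{\theta}\in\boldsymbol{\Theta}}\sum_{\mathbf{p}_{ij}}I^{N}(\mathbf{p}_{ij};\boldsymbol{\theta})^{2}=L_{N}\le\overline{L_{N}}$, the last step being the induction hypothesis that $\overline{L_{N}}$ is a valid upper bound. It therefore remains only to establish $I^{N}(\boldsymbol{\eta}_{N+1}^{\hat{\theta}};\hat{\boldsymbol{\theta}})\le Q^{N}$.

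This last inequality is the crux, and I expect it to be the main obstacle, since it must compare the true IWE built under the unknown optimal warp with the surrogate ``upper-bound IWE'' $\overline{I}^{N}$ produced by the greedy rule of always incrementing the currently largest accumulator inside the current event's bounding box. Write $m=I^{N}(\boldsymbol{\eta}_{N+1}^{\hat{\theta}};\hat{\boldsymbol{\theta}})$ and let $k_{1}<\dots<k_{m}\le N$ be exactly those events whose warp under $\hat{\boldsymbol{\theta}}$ rounds to $\boldsymbol{\eta}_{N+1}^{\hat{\theta}}$, the same accumulator as event $N+1$. Because all of $k_{1},\dots,k_{m},N+1$ warp under $\hat{\boldsymbol{\theta}}$ to a common location, Lemma~\ref{lemma:nest_boundingbox} applied to consecutive pairs gives the nested chain $\mathcal{P}^{\boldsymbol{\Theta}}_{k_{1}}\subseteq\mathcal{P}^{\boldsymbol{\Theta}}_{k_{2}}\subseteq\dots\subseteq\mathcal{P}^{\boldsymbol{\Theta}}_{k_{m}}\subseteq\mathcal{P}^{\boldsymbol{\Theta}}_{N+1}$. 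I would then run an inner induction on $l$ showing that once the greedy construction has processed event $k_{l}$, the largest accumulator value inside $\mathcal{P}^{\boldsymbol{\Theta}}_{k_{l}}$ is at least $l$: it is trivially $\ge 0$ before $k_{1}$ and $\ge 1$ after incrementing inside $\mathcal{P}^{\boldsymbol{\Theta}}_{k_{1}}$; and going into the processing of $k_{l+1}$ the value is still $\ge l$ because $\overline{I}$ never decreases and $\mathcal{P}^{\boldsymbol{\Theta}}_{k_{l}}\subseteq\mathcal{P}^{\boldsymbol{\Theta}}_{k_{l+1}}$, after which processing $k_{l+1}$ increments the maximal accumulator of $\mathcal{P}^{\boldsymbol{\Theta}}_{k_{l+1}}$ and hence raises that maximum by exactly one. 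At $l=m$ this yields $\max_{\mathbf{p}_{ij}\in\mathcal{P}^{\boldsymbol{\Theta}}_{k_{m}}}\overline{I}^{N}(\mathbf{p}_{ij})\ge m$; since $\mathcal{P}^{\boldsymbol{\Theta}}_{k_{m}}\subseteq\mathcal{P}^{\boldsymbol{\Theta}}_{N+1}$ and $\overline{I}$ only grows while the remaining events are processed, $Q^{N}=\max_{\mathbf{p}_{ij}\in\mathcal{P}^{\boldsymbol{\Theta}}_{N+1}}\overline{I}^{N}(\mathbf{p}_{ij})\ge m$, as required.

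Combining the three bounds gives $L_{N+1}\le\overline{L_{N}}+1+2Q^{N}=\overline{L_{N+1}}$, which simultaneously closes the outer induction that $\overline{L_{N+1}}$ is a valid upper bound; the base case ($N=0$, where $\overline{L_{0}}=L_{0}=0$) is trivial. Two points deserve care. First, Lemma~\ref{lemma:nest_boundingbox} is stated for events warping to the \emph{exactly} same position, whereas $k_{1},\dots,k_{m}$ are only known to share the same rounded accumulator; I would invoke the ``small enough time interval'' regime of Lemma~\ref{lemma:nest_boundingbox} (treating sub-pixel-coincident events as generated by a common 3D point) to license the nesting, noting that the argument in fact only needs the weaker implication ``$\boldsymbol{\eta}_{N+1}^{\hat{\theta}}\in\mathcal{P}^{\boldsymbol{\Theta}}_{k_{l}}\ \Rightarrow\ \mathcal{P}^{\boldsymbol{\Theta}}_{k_{l}}\subseteq\mathcal{P}^{\boldsymbol{\Theta}}_{N+1}$''. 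Second, one should record the elementary facts that $\boldsymbol{\eta}_{N+1}^{\hat{\theta}}\in\mathcal{P}^{\boldsymbol{\Theta}}_{N+1}$ (so that $b$ genuinely collapses onto an accumulator inside the bounding box) and that incrementing a set's maximal accumulator increases the set-maximum by exactly one — both used repeatedly above.
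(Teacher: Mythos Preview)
Your proposal is correct and follows the paper's overall architecture: the same $a+b+c$ split at the maximiser $\hat{\boldsymbol{\theta}}$, the same bound $a\le L_N\le\overline{L_N}$ via the outer induction hypothesis, and the same reduction of the whole claim to the pixel-level inequality $Q^N\ge I^N(\boldsymbol{\eta}_{N+1}^{\hat\theta};\hat{\boldsymbol{\theta}})$ using Lemma~\ref{lemma:nest_boundingbox}. Where you differ is in how that last inequality is established. The paper picks only the \emph{most recent} index $k<N+1$ with $\boldsymbol{\eta}_k^{\hat\theta}=\boldsymbol{\eta}_{N+1}^{\hat\theta}$ and invokes the outer induction hypothesis at level $k$ to get $Q^{k-1}\ge I^{k-1}(\boldsymbol{\eta}_k^{\hat\theta};\hat{\boldsymbol{\theta}})$, then adds one and pushes forward via $\mathcal{P}^{\boldsymbol{\Theta}}_k\subseteq\mathcal{P}^{\boldsymbol{\Theta}}_{N+1}$. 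You instead enumerate the full chain $k_1<\dots<k_m$ and run a separate inner induction showing the greedy maximum inside $\mathcal{P}^{\boldsymbol{\Theta}}_{k_l}$ is at least $l$ after step $k_l$. Your route is slightly more elementary and, notably, self-contained: it never applies the outer hypothesis at a level where the optimiser $\hat{\boldsymbol{\theta}}$ (which is defined relative to $L_{N+1}$) may differ from the optimiser at that earlier level --- a point the paper's recursion glosses over. The paper's version is more compact; yours is more explicit and sidesteps that subtlety. Your caveat about Lemma~\ref{lemma:nest_boundingbox} being phrased for exact coincidence rather than shared rounded accumulator is well taken and applies equally to the paper's own use of the lemma.
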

\begin{proof}
(\ref{eq:opt_recursive_formulation}) is straightforwardly derived from (\ref{equ:lower bound}). The proof of inequation (\ref{eq:upper_bound}) then proceeds by mathematical induction. \\

For $N$ = 0, it is obvious that $L_{0} = \overline{L_{0}}= 0$.  Similarly, for $N$ = 1, $L_{1} = 1 \leq \overline{L_{0}} + 1 + 0$, and $Q^0 = I^{0}(\boldsymbol{\eta}_{1}^{\hat{\theta}};\hat{\boldsymbol{\theta}}) = 0$ (which satisfies Theorem~\ref{the:upper bound}).  We now assume that $\overline{L_n}$ as well as the corresponding upper bound IWE $\overline{I}^{n}$ are given for all $0<n\leq N$. We furthermore assume that they satisfy Theorem~\ref{the:upper bound}. Our aim is to prove that \eqref{eq:upper_bound} holds for the $(N+1)$-th event. It is clear that $\overline{L_N} \geq L_N$, and we only need to prove that $Q^N \geq I^{N}(\boldsymbol{\eta}_{N+1}^{\hat{\theta}};\hat{\boldsymbol{\theta}})$, for which we will make use of Lemma 1. There are two cases to be distinguished:
\begin{itemize}
    \item The first case is if there exists an event $\epsilon_k$ with $0 < k < N+1$ and for which $\boldsymbol{\eta}_{k}^{\hat{\theta}} = \boldsymbol{\eta}_{N+1}^{\hat{\theta}}$. In other words, the $k$-th and the $(N+1)$-th event are warped to a same accumulator if choosing the locally optimal parameters. Note that if there are multiple previous events for which this condition holds, the $k$-th event is chosen to be the most recent one. Given our assumptions, $\overline{L_{k-1}}$ as well as the $(k-1)$-th constructed upper bound IWE satisfy Theorem~\ref{the:upper bound}, which means that $Q^{k-1} \geq I^{k-1}(\boldsymbol{\eta}_{k}^{\hat{\theta}};\hat{\boldsymbol{\theta}})$. Let $\mathbf{p}_k \in \mathcal{P}^{\Theta}_k$ now be the pixel location with maximum intensity in $\overline{I}^{k-1}(\mathbf{p}_k)$. Then, the $k$-th updated IWE satisfies  $\overline{I}^k(\mathbf{p}_k) = Q^{k-1}+1 \geq I^{k-1}(\boldsymbol{\eta}_{k}^{\hat{\theta}};\hat{\boldsymbol{\theta}}) +1 $. According to Lemma~\ref{lemma:nest_boundingbox}, we have $\mathcal{P}^{\Theta}_k \subseteq \mathcal{P}^{\Theta}_{N+1}$, therefore $\mathbf{p}_k \subseteq \mathcal{P}^{\Theta}_{N+1}$, and $Q^N \geq \overline{I}^k(\mathbf{p}_k) \geq  I^{k-1}(\boldsymbol{\eta}_{k}^{\hat{\theta}};\hat{\boldsymbol{\theta}})+1 $. With optimal warp parameters $\hat{\boldsymbol{\theta}}$, events with indices from $k+1$ to $N$ will not locate at $\boldsymbol{\eta}_{N+1}^{\hat{\theta}} $
    , and therefore $I^{k-1}(\boldsymbol{\eta}_{k}^{\hat{\theta}};\hat{\boldsymbol{\theta}})+1 = I^{N}(\boldsymbol{\eta}_{N+1}^{\hat{\theta}};\hat{\boldsymbol{\theta}}) \leq Q^N $.
    \item If there is no such a event, it is obvious that $Q^N \geq  I^{N}(\boldsymbol{\eta}_{N+1}^{\hat{\theta}};\hat{\boldsymbol{\theta}})$.
    \end{itemize}
With the basic cases and the induction step proven, we conclude our proof that Theorem~\ref{the:upper bound} holds for all natural numbers $N$. 
\end{proof}

We apply the proposed strategy to derive upper and lower bounds for all six aforementioned contrast functions, and list them in Table~\ref{tab:upper_bounds}. Note that the initial case varies for different loss functions. The globally-optimal contrast maximisation framework~(GOCMF) is outlined in Algorithm~\ref{alg:GOCMF} and Algorithm~\ref{alg:RB}. We propose a nested strategy for calculating upper bounds, in which the outer layer $RB$ evaluates the objective function, while the inner layer $BB$ estimates the bounding box $\mathcal{P}_{N}^{\boldsymbol{\Theta}}$ and depends on the specific motion parametrisation.
\begin{table}[t]
\centering
\caption{Recursive Upper and Lower Bounds}
\label{tab:upper_bounds}
\renewcommand\arraystretch{1.5}
\begin{tabular}{|c|c|c|c|}
\hline
 & \textbf{Upper Bound} $\overline{L_{N}}$ & \textbf{Lower Bound} $\underline{L_{N}}$ & $L_0$ \\ \hline
\tiny \textbf{SoS}       & \tiny $ \overline{L_{N-1}} + 1 + 2 Q$                    & \tiny  $\underline{L_{N-1}} + 1 + 2 I^{N-1}(\boldsymbol{\eta}_N^{\theta_0};\boldsymbol{\theta}_0)$  &   \tiny 0                     \\ \hline
\tiny \textbf{Var}       & \tiny  $\overline{L_{N-1}} + \frac{1}{N_{p}} - \frac{2 \mu_{I}}{N_{p}} + \frac{2}{N_{p}} Q$   & \tiny  $\underline{L_{N-1}} + \frac{1}{N_{p}} - \frac{2 \mu_{I}}{N_{p}} + \frac{2}{N_{p}} I^{N-1}(\boldsymbol{\eta}_N^{\theta_0};\boldsymbol{\theta}_0)$                    &  \tiny $\mu_{I}^2$                     \\ \hline
\tiny \textbf{SoE}       &  \tiny $\overline{L_{N-1}} + (e-1) e^{Q}$                   &  \tiny $\underline{L_{N-1}} + (e-1) e^{I^{N-1}(\boldsymbol{\eta}_N^{\theta_0};\boldsymbol{\theta}_0)}$                    &   \tiny $N_{p}$                    \\ \hline
\tiny \textbf{SoSA}      &  \tiny $\overline{L_{N-1}} + (e^{-\delta}-1) e^{-\delta \cdot Q}$                   &  \tiny $\underline{L_{N-1}} + (e^{-\delta}-1) e^{-\delta \cdot I^{N-1}(\boldsymbol{\eta}_N^{\theta_0};\boldsymbol{\theta}_0)}$                    &   \tiny $N_{p}$                    \\ \hline
\tiny \textbf{SoEaS}     & \tiny $\overline{L_{N-1}} + 1 + 2 Q + (e-1) e^{Q}$            & \tiny $\underline{L_{N-1}} + 1 + 2 I^{N-1}(\boldsymbol{\eta}_N^{\theta_0};\boldsymbol{\theta}_0) + (e-1) e^{I^{N-1}(\boldsymbol{\eta}_N^{\theta_0};\boldsymbol{\theta}_0)}$            &   \tiny $N_{p}$                                \\ \hline
\tiny \textbf{SoSAaS}    &  \tiny $\overline{L_{N-1}} + 1 + 2Q + (e^{-\delta}-1)e^{-\delta Q}$   &  \tiny $\underline{L_{N-1}} + 1 + 2I^{N-1}(\boldsymbol{\eta}_N^{\theta_0};\boldsymbol{\theta}_0) + (e^{-\delta}-1)e^{-\delta I^{N-1}(\boldsymbol{\eta}_N^{\theta_0};\boldsymbol{\theta}_0)}$                   &                      \tiny $N_{p}$                                \\ \hline
\end{tabular}
\end{table}
\begin{minipage}[t]{0.5\textwidth}
    \begin{algorithm}[H]
    	\caption{GOCMF: globally optimal contrast maximisation framework} 
 		{\bf Input:} 
		event set $\mathcal{E}$, initial search space $\boldsymbol{\Theta}$, branching limit $N_b$\\
 		{\bf Output:} 
 		optimal warping parameters $\hat{\boldsymbol{\theta}}$
 		\begin{algorithmic}[1]
 			\State Initialize $\hat{\boldsymbol{\theta}}$ with the center of $\boldsymbol{\Theta}$,\\$L^* \leftarrow 0$ , $S$ $\leftarrow$ \{RB($\mathcal{E}$, $\boldsymbol{\Theta}$), $\boldsymbol{\Theta}$\}
 			\State Push $S$ into queue $Q$, $S^* \leftarrow S$
 			\While{$i < N_b$} 
 				\State $L^* \leftarrow 0$
		                 \If{$S^*.\underline{L},  == S^*.\overline{L}$} 
		                 \State $\hat{\boldsymbol{\theta}}  \leftarrow $ Center of $S^*.\boldsymbol{\Theta}$, break
		                 \EndIf
 				\For{each node $S \in Q$}
 					\State Pop $S$, split into subspaces $S_j$
 					\For{all subspaces $S_j$}
 						\State $\{S_j.\underline{L},S_j.\overline{L}\} \leftarrow $ RB($\mathcal{E}$, $\boldsymbol{\Theta}_j$)
 						\If {$S_j.\underline{L} > L^*$} 
 						\State $L^* \leftarrow S_j.\underline{L}$ , $S^* \leftarrow S_j$
 						\EndIf
 						\State Push $S_j$ into $Q$
 					\EndFor
 				\EndFor
 				\State Prune branches in $Q$

 				\State $i \leftarrow i + 1$
 			\EndWhile
 			\State \Return $\hat{\boldsymbol{\theta}}$
 		\end{algorithmic}
 		\label{alg:GOCMF}
     \end{algorithm}
\end{minipage}
\hspace{0.02cm}
\begin{minipage}[t]{0.47\textwidth}
    \begin{algorithm}[H]
		\caption{RB: recursive bounds calculation} 
		{\bf Input:} 
 		event set $\mathcal{E}$, search space $\boldsymbol{\Theta}$\\
		{\bf Output:} 
		lower bound $\underline{L}$, upper bound $\overline{L}$
		\begin{algorithmic}[1]
			\State Initialize accumulator images $\overline{I}$ and $I$ with zeros 
			\State Initialize $\underline{L}$, $\overline{L}$ according to Table~\ref{tab:upper_bounds}
			\State $\boldsymbol{\theta}_0 \leftarrow$ center of $\boldsymbol{\Theta}$ 
			\For{each event $e_k \in \mathcal{E}$}
				\State $\mathcal{P}_k^{\boldsymbol{\Theta}} \leftarrow BB(W(\cdot),\boldsymbol{\Theta},e_k) $
				\State $Q \leftarrow \max_{\substack{
    \mathbf{p}_{ij} \in \mathcal{P}_{k}^{\boldsymbol{\Theta}}
    }} \overline{I}(\mathbf{p}_{ij})$
				\State $\boldsymbol{\eta}_k^{\theta_0} \leftarrow \operatorname{round}(W(\mathrm{\bold{x}}_{k},t_{k};\boldsymbol{\theta}_{0}))$
				\State Update $\underline{L}$, $\overline{L}$ (cf. Table~\ref{tab:upper_bounds})
				\State $\boldsymbol{\nu}_k \leftarrow  \operatorname{argmax}_{\substack{
    \mathbf{p}_{ij} \in \mathcal{P}_{k}^{\boldsymbol{\Theta}}
    }} \overline{I}(\mathbf{p}_{ij})$
				\State $\overline{I}(\boldsymbol{\nu}_k) \leftarrow  \overline{I}(\boldsymbol{\nu}_k)+ 1$
				\State $I(\boldsymbol{\eta}_k^{\theta_0}) \leftarrow  I(\boldsymbol{\eta}_k^{\theta_0})+ 1$ 
			\EndFor
			\State \Return $\underline{L}$, $\overline{L}$
		\end{algorithmic}
	\label{alg:RB}
    \end{algorithm}
\end{minipage}


\section{Application to Visual Odometry with a downward-facing Event Camera}
\label{sec:casestudy}

Motion estimation for planar Autonomous Ground Vehicles~(AGVs) is an important problem in intelligent transportation~\cite{yifu2020vehicle,peng2019articulated,Huang2019vehicle}. An interesting alternative is given by employing a downward instead of a forward facing camera, thus permitting direct observation of the ground plane with known depth. This largely simplifies the geometry of the problem and notably turns the image-to-image warping into a homographic mapping that is linear in homogeneous space. The strategy is widely used in relevant applications such as sweeping robots and factory AGVs, and a good review is presented in~\cite{aqel2016review}. However, the method is affected by potentially severe challenges given by the image appearance: a) reliable feature matching or even extraction may be difficult for certain noisy ground textures, b) fast motion may easily lead to motion blur, and c) stable appearance may require artificial illumination. Many existing methods therefore do not employ feature correspondences but aim at a correspondence-less alignment or even a full photometric image alignment. Besides more classical RANSAC-based hypothesise-and-test schemes~\cite{chen2018streetmap}, the community therefore has also developed appearance-based template matching approaches~\cite{dille2010outdoor,nourani2009practical,yu2011appearance,nourani2011correlation,gonzalez2012combined}, solvers based on efficient second-order minimisation~\cite{lovegrove2011accurate,zienkiewicz2015extrinsics,jordan2016ground}, and methods exploiting the Fast Fourier Transform~\cite{piyathilaka2010experimental,birem2018visual}, the Fourier-Mellin Transform~\cite{guo2005application,kazik2011visual}, or the Improved Fourier Mellin Invariant~\cite{xu2019improved,bulow2009fast}. In an attempt to tackle highly self-similar ground textures, Dille~et~al.~\cite{dille2010outdoor} propose to use an optical flow sensor instead of a regular CMOS camera.

A critical question is given by the position of the camera. The camera may hang in the front or rear of the vehicle, which gives increased distance to the ground plane and in turn reduces motion blur. However, it also causes moving shadows in the image, and generally complicates the stabilisation of the image appearance and thus repeatable feature detection or region-based matching. A common alternative therefore is given by installing the camera underneath the vehicle paired with an artificial light source~(e.g.~\cite{dille2010outdoor,birem2018visual}). However, the short distance to the ground plane may easily lead to unwanted motion blur. We therefore consider an event camera as a highly interesting and much more dynamic alternative visual sensor for this particular scenario.

\begin{figure}[t]
  \centering
  \includegraphics[width = 0.45\textwidth]{./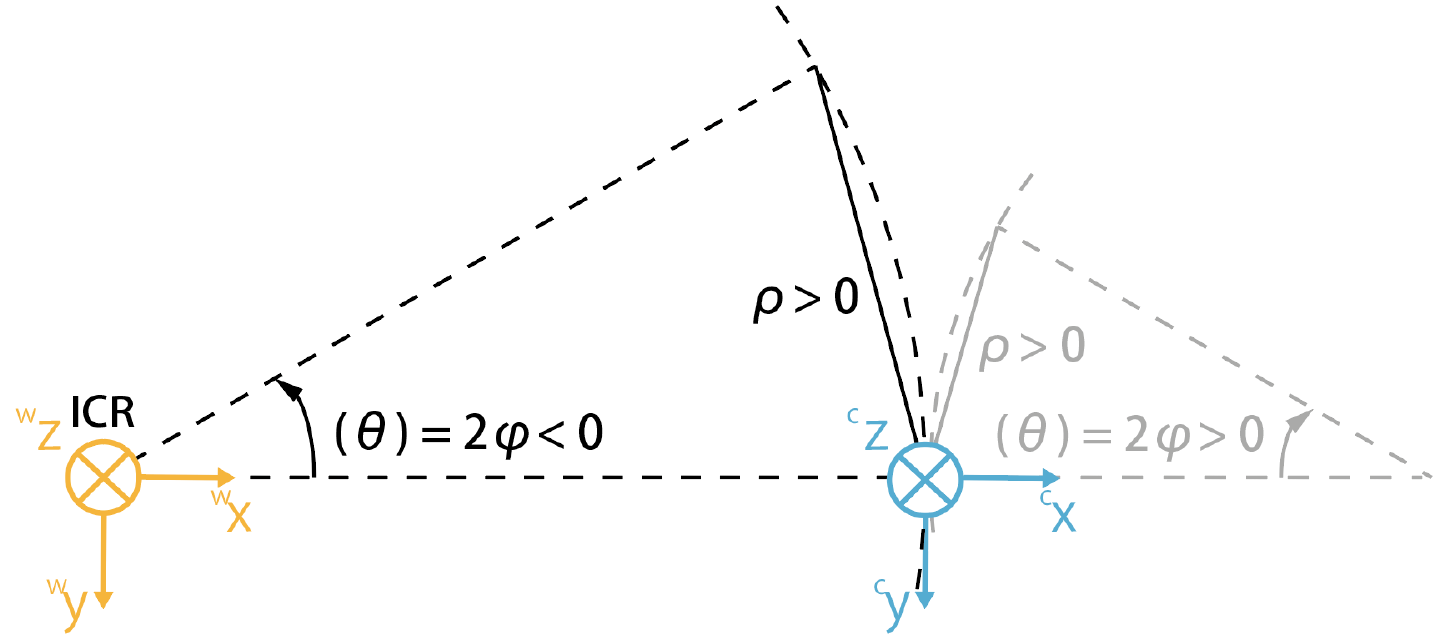}
  \includegraphics[width = 0.45\textwidth]{./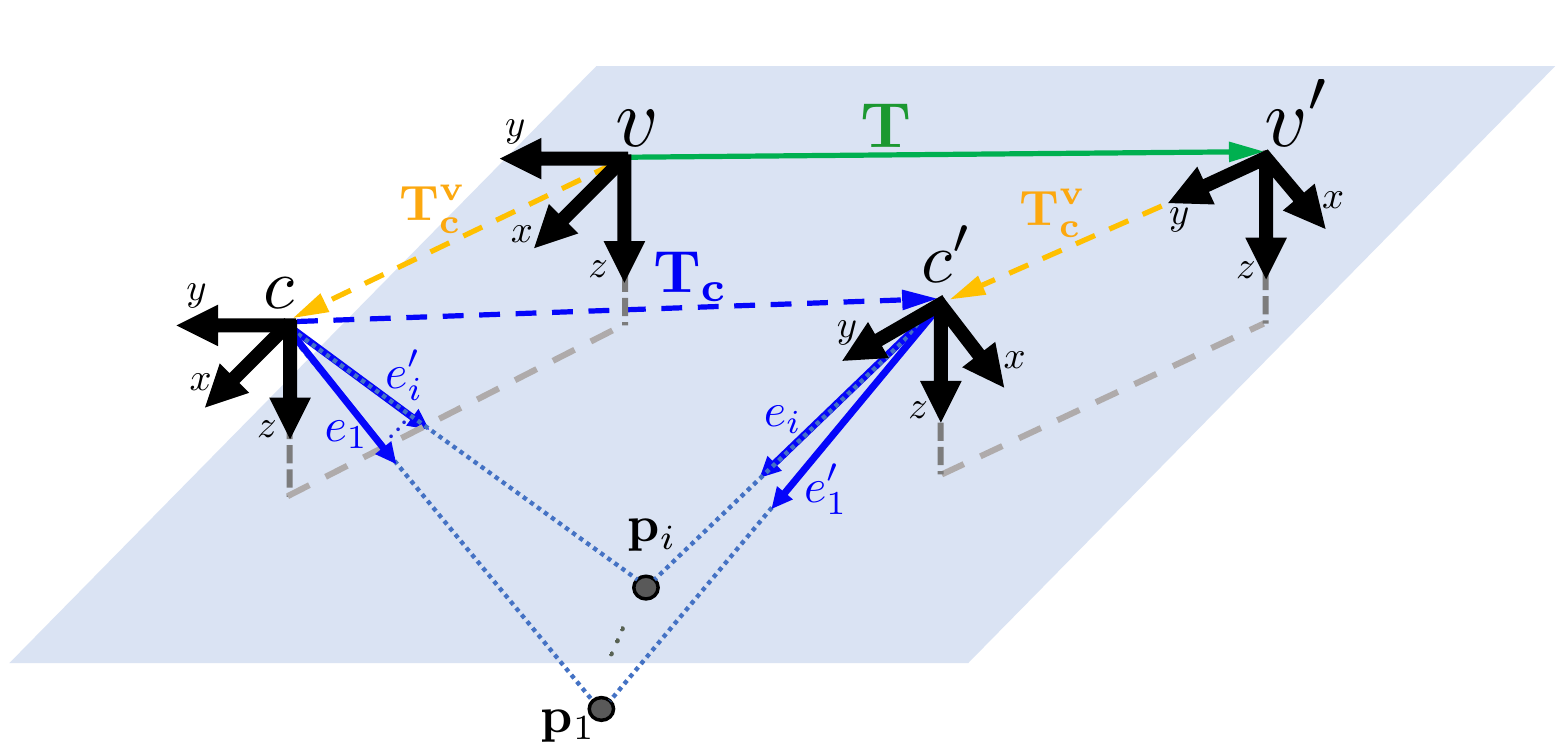}
  \label{Chaining of Transform}
  \caption{Left: \textit{The Ackermann steering model} with the ICR~\cite{ling2020efficient}. Both a left and a right turn are illustrated. Right: Connections between vehicle displacement, extrinsic transformation, and relative camera pose.}
  \label{Ackermann Steering Model + Chaining of Transform}
\end{figure}

\subsection{Homographic Mapping and Bounding Box Extraction}

We rely the globally-optimal BnB solver for correspondence-less AGV motion presented in \cite{ling2020efficient}, which also employs a normal, downward facing camera. We employ the two-dimensional Ackermann steering model describing the commonly non-holonomic motion of an AGV. Employing this 2-DoF model leads to benefits in BnB, the complexity of which strongly depends on the dimensionality of the solution space. As illustrated in Figure~\ref{Ackermann Steering Model + Chaining of Transform}, the Ackermann model constrains the motion of the vehicle to follow a circular-arc trajectory about an Instantaneous Centre of Rotation~(ICR). The motion between successive frames can be conveniently described at the hand of two parameters: the half-angle of the relative rotation angle~$\phi$, and the baseline between the two views~$\rho$. However, the alignment of the events requires a temporal parametrisation of the relative pose, which is why we employ the angular velocity $\omega = \frac{\theta}{t} = \frac{2\phi}{t}$ as well as the translational velocity $v = \omega r = \omega \rho \frac{1}{2\sin(\phi)}$ in our model. The relative transformation from vehicle frame $v'$ back to $v$ is therefore given by
\begin{equation}
  \label{transform_with_respect_to_vehicle}
  \mathbf{R}_v = \left[
                              \begin{matrix}
                                \cos\!\left(\omega t\right) & - \sin\!\left(\omega t\right) & 0 \\
                                \sin\!\left(\omega t\right) &   \cos\!\left(\omega t\right) & 0 \\
                                0 & 0 & 1
                              \end{matrix}
                            \right] \text{ and }
  \mathbf{t}_v = \frac{v}{\omega} \left[
                              \begin{matrix}
                                1 - \cos\!\left(\omega t\right) \\
                                    \sin\!\left(\omega t\right) \\
                                                              0
                              \end{matrix}
                            \right] \,.
\end{equation}
Further details about the derivation are given in the supplementary material.

In practice the vehicle frame hardly coincides with the camera frame. The orientation and the height of the origin can be chosen to be identical, and the camera may be laterally mounted in the centre of the vehicle. However, there is likely to be a displacement along the forward direction, which we denote by the signed variable~$s$. In other words, $\mathbf{R}_{v}^{c}=\mathbf{I}_{3\times 3}$ and $\mathbf{t}_{v}^{c}=\left[ \begin{matrix} 0 & s & 0 \end{matrix} \right]^T$. As illustrated in Figure~\ref{Ackermann Steering Model + Chaining of Transform}, the transformation from camera pose $c'$ (at an arbitrary future timestamp) to $c$ (at the initial timestamp $t_{\mathrm{ref}}$) is therefore given by
\begin{equation}
\label{rotation_matrix_and_translation_vector}
\begin{split}
	& \mathbf{R}_c = \mathbf{R}_{v}^{cT}\mathbf{R}_v \mathbf{R}_{v}^{c} \,, \\
	& \mathbf{t}_c = -\mathbf{R}_{v}^{cT}\mathbf{t}_{v}^{c} + \mathbf{R}_{v}^{cT}\mathbf{t}_v + \mathbf{R}_{v}^{cT}\mathbf{R}_v\mathbf{t}_{v}^{c}\,.
\end{split}
\end{equation}

Using the known plane normal vector~$\mathbf{n}=\left[\begin{matrix}0 & 0 & -1 \end{matrix}\right]^T$ and depth-of-plane~$d$, the image warping function~$W(\mathbf{x}_k,t_k;[\omega\text{ }v]^T)$ that permits the transfer of an event~$e_k = \{ \mathbf{x}_k,t_k,s_k \}$ into the reference view at~$t_{\mathrm{ref}}$ is finally given by the planar homography equation
\begin{equation}
    \label{homography}
    \textbf{H} \left[ \begin{matrix} \mathbf{x}_k^T & 1 \end{matrix}\right]^T = \textbf{K}(\textbf{R}_c-\frac{\textbf{t}_c\textbf{n}^\textbf{T}}{d})\textbf{K}^{\textbf{-1}} \left[ \begin{matrix} \mathbf{x}_k^T & 1 \end{matrix}\right]^T \,.
\end{equation}
Note that $\mathbf{K}$ here denotes a regular perspective camera calibration matrix with homogeneous focal length $f$, zero skew, and a principal point at $\left[ \begin{matrix} u_0 & v_0 \end{matrix} \right]^T$. Note further that the substituted time parameter needs to be equal to $t=t_k-t_{\mathrm{ref}}$, and that the result needs to be dehomogenised. After expansion, we easily obtain
\begin{eqnarray}
  \mathbf{x}^{\prime}_{k} & = & W(\mathbf{x}_k,t_k;[\omega\text{ }v]^T) = \left[ \begin{matrix} x_{k}^{\prime} & y_{k}^{\prime} \end{matrix} \right]^T \\
  & = & \left[ \begin{matrix}
      - [y_k - v_0 + s \frac{f}{d}] \sin(\omega t)
      + [x_k - u_0 - \frac{f}{d} (\frac{v}{w})] \cos(\omega t)
      + \frac{f}{d} (\frac{v}{w}) + u_0 \\
      [x_k - u_0 - \frac{f}{d} (\frac{v}{w})] \sin(\omega t)
      + [y_k - v_0 + s \frac{f}{d}] \cos(\omega t)
      - s \frac{f}{d} + v_0
  \end{matrix} \right] \,. \nonumber
\end{eqnarray}

Finally, the bounding box $\mathcal{P}_{k }^{\boldsymbol{\Theta}}$ is found by bounding the values of $x_{k}^{\prime}$ and $y_{k}^{\prime}$ over the intervals $\omega\in\mathcal{W}=\left[\omega_{\mathrm{min}};\omega_{\mathrm{max}}\right]$ and $v\in\mathcal{V}=\left[v_{\mathrm{min}};v_{\mathrm{max}}\right]$. The bounding is easily achieved if simply considering monotonicity of functions over given sub-branches. For example, if $\omega_{\mathrm{min}} \geq 0$, $v_{\mathrm{min}} \geq 0$, $x_k \geq u_0$, and $y_k \geq v_0 - s \frac{f}{d}$, we obtain
\small
\begin{eqnarray}
    \underline{x_{k}^{\prime}} &=& - [y_k - v_0 + s \frac{f}{d}] \sin(\omega_{\mathrm{max}} t)
                    + [x_k - u_0 - \frac{f}{d} (\frac{v_{\mathrm{min}}}{w_{\mathrm{max}}})] \cos(\omega_{\mathrm{max}} t)
                    + \frac{f}{d} (\frac{v_{\mathrm{min}}}{w_{\mathrm{max}}}) + u_0 \,, \nonumber \\
    \overline{x_{k}^{\prime}}  &=& - [y_k - v_0 + s \frac{f}{d}] \sin(\omega_{\mathrm{min}} t)
                    + [x_k - u_0 - \frac{f}{d} (\frac{v_{\mathrm{max}}}{w_{\mathrm{min}}})] \cos(\omega_{\mathrm{min}} t)
                    + \frac{f}{d} (\frac{v_{\mathrm{max}}}{w_{\mathrm{min}}}) + u_0 \,, \nonumber \\
    \underline{y_{k}^{\prime}} &=&   [x_k - u_0 - \frac{f}{d} (\frac{v_{\mathrm{max}}}{w_{\mathrm{min}}})] \sin(\omega_{\mathrm{min}} t)
                    + [y_k - v_0 + s \frac{f}{d}] \cos(\omega_{\mathrm{max}} t)
                    - s \frac{f}{d} + v_0 \,, \text{ and} \nonumber \\
    \overline{y_{k}^{\prime}} &=&    [x_k - u_0 - \frac{f}{d} (\frac{v_{\mathrm{min}}}{w_{\mathrm{max}}})] \sin(\omega_{\mathrm{max}} t)
                    + [y_k - v_0 + s \frac{f}{d}] \cos(\omega_{\mathrm{min}} t)
                    - s \frac{f}{d} + v_0 \,.
\end{eqnarray}
\normalsize
We kindly refer the reader to the supplementary material for all further cases.

\section{Experimental evaluation}

We present two suites of experiments. The first one validates the global optimality, accuracy and robustness of our solver on simulated data. The second one then applies it to the real-world scenario of AGV motion estimation.

\subsection{Accuracy and Robustness of Globally Optimal Motion Estimation}

We start by evaluating the accuracy of the motion estimation with contrast maximisation function $L_{\mathrm{SoS}}$ over synthetic data. As already implied in~\cite{gallego2019focus}, $L_{\mathrm{SoS}}$ can be considered as a solid starting point for the evaluation. Our synthetic data consists of randomly generated horizontal and vertical line segments on a plane at a depth of 2.0m. We consider Ackermann motion with an angular velocity $\omega = 28.6479^{\circ}$/s (0.5rad/s) and a linear velocity $v = 0.5$m/s. Events are generated by randomly choosing a 3D point on a line, and reprojecting it into a random camera pose sampled by a random timestamp within the interval $[0, 0.1s]$. The result of our method is finally evaluated by running BnB over the search space $\mathcal{W}=[0.4,0.6]$ and $\mathcal{V}=[0.4,0.6]$, and comparing the retrieved solution against the result of an exhaustive search with sampling points every $\delta \omega=0.001$rad/s and $\delta v=0.001$m/s. BnB is furthermore configured to terminate the search if $|\omega_{max}-\omega_{min}| \leq 0.00078$rad/s or $|v_{max}-v_{min}| \leq 0.00078$m/s. The experiment is repeated 1000 times.

Figures~\ref{fig:CM_Error_w} and \ref{fig:CM_Error_v} illustrate the distribution of the errors for both methods in the noise-free case. The standard deviation of the exhaustive search and BnB are $\sigma_{\omega}=1.0645^{\circ}$/s, $\sigma_{v}=0.0151$m/s and $\sigma_{\omega}=1.305^{\circ}$/s, $\sigma_{v}=0.0150$m/s, respectively. While this result suggests that BnB works well and sustainably returns a result very close to the optimum found by exhaustive search, we still note that the optimum identified by both methods has a bias with respect to ground truth, even in the noise-free case. Note however that this is related to the nature of the contrast maximisation function, and not our globally optimal solution strategy.

\begin{figure}[t]
\centering
\subfigure[]{
\includegraphics[width=0.22\textwidth]{./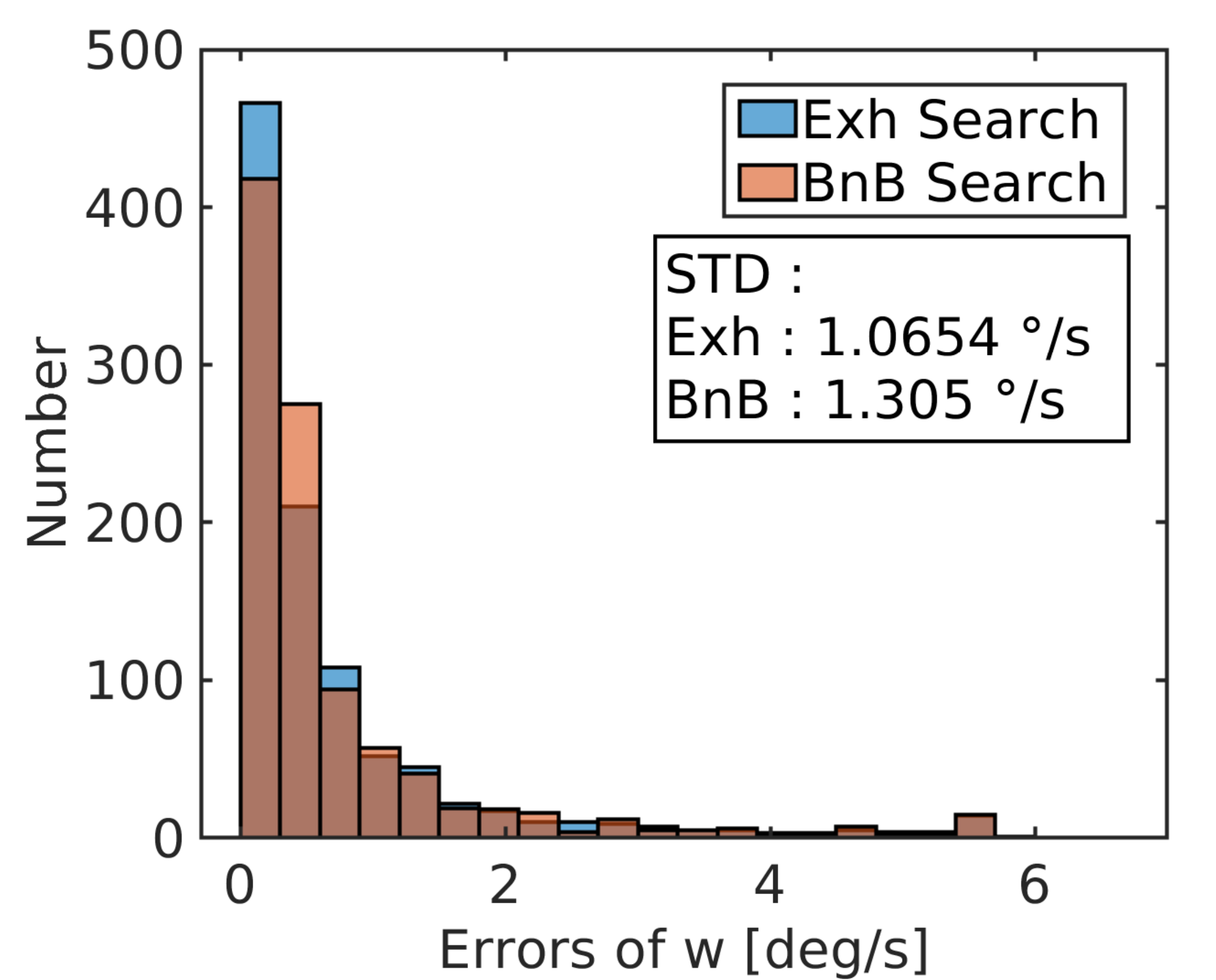} 
\label{fig:CM_Error_w}
}
\subfigure[]{
\includegraphics[width=0.22\textwidth]{./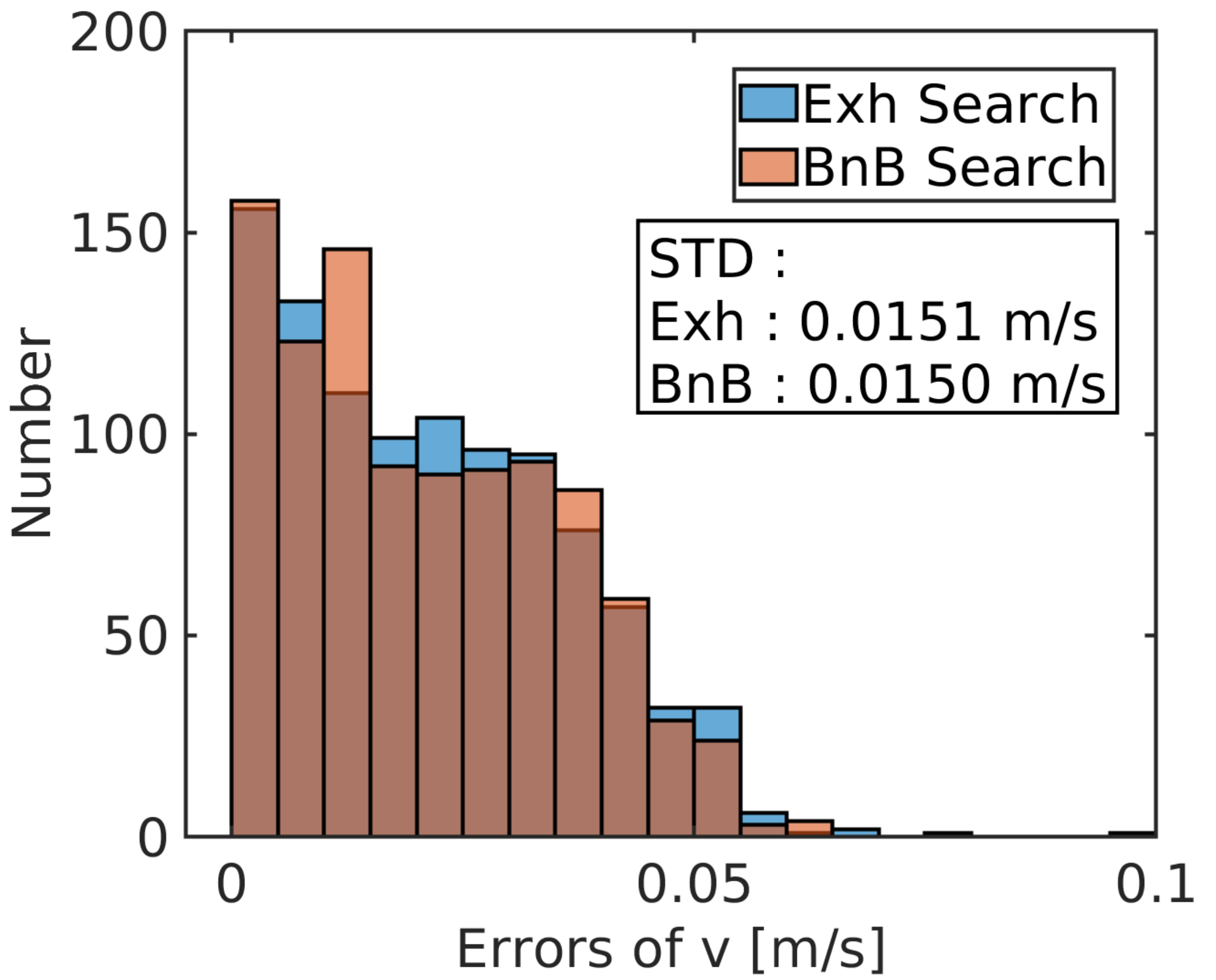} 
\label{fig:CM_Error_v}
}
\subfigure[]{
\includegraphics[width=0.22\textwidth]{./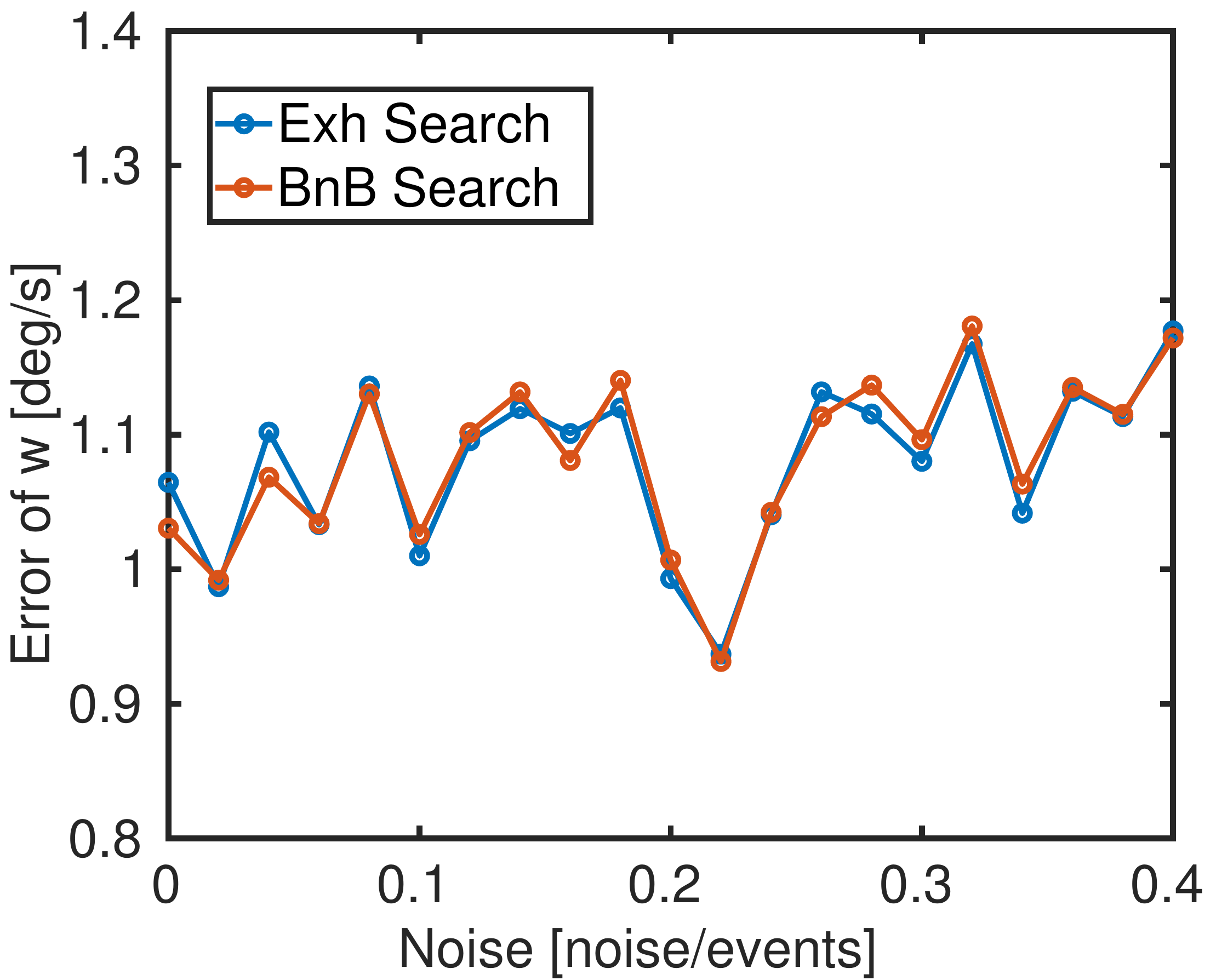} 
\label{fig:CM_Error_Noise_w}
}
\subfigure[]{
\includegraphics[width=0.22\textwidth]{./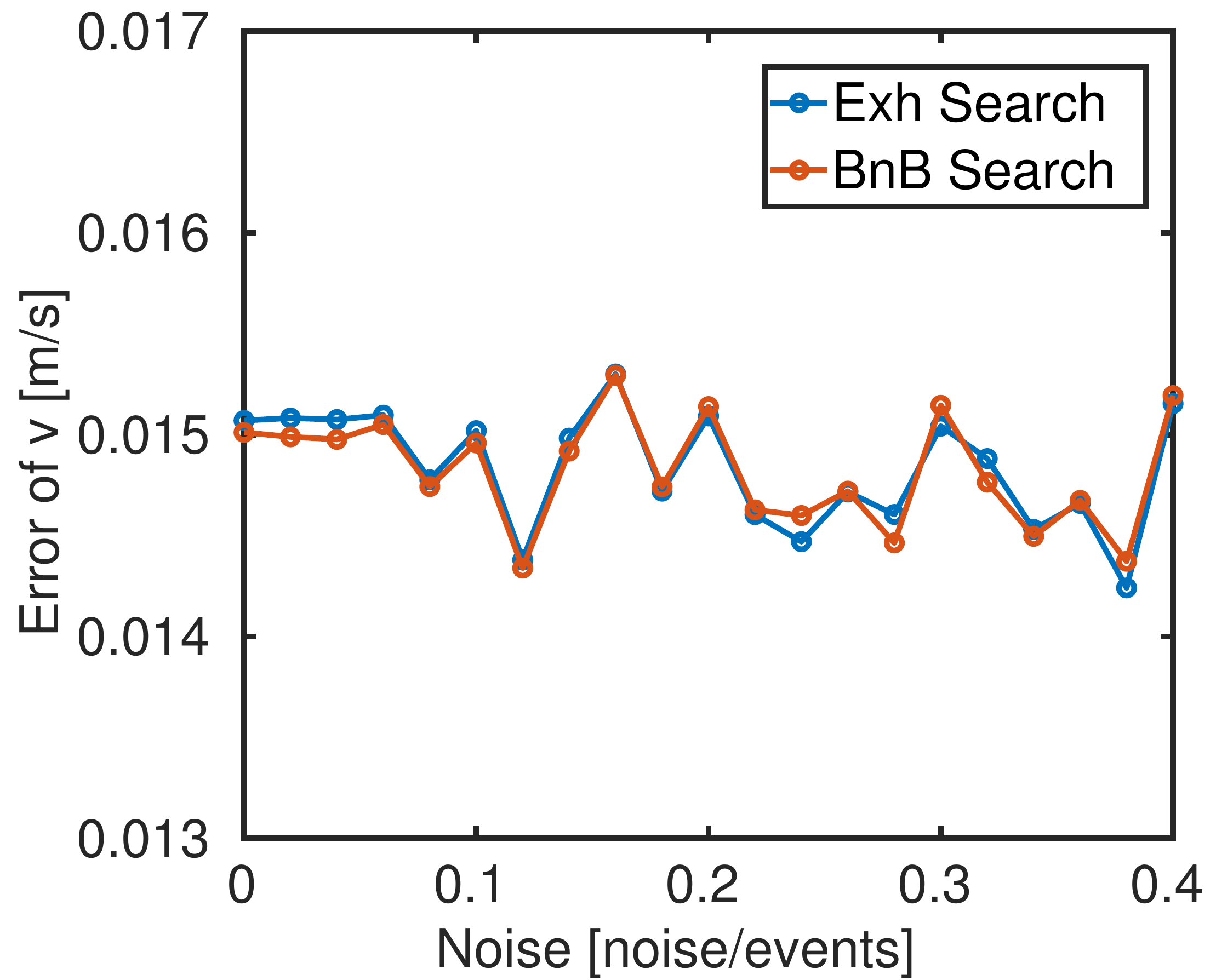}
\label{fig:CM_Error_Noise_v}
}
\caption{Simulation Results. (a) and (b) indicate the error distribution for $\omega$ and $v$ over all experiments for both our proposed method as well as an exhaustive search. (c) and (d) visualise the average error of the estimated parameters caused by additional salt and pepper noise on the event stream. Results are averaged over 1000 random experiments. Note that our proposed method has excellent robustness even for N/E ratios up to 40\%.}
\label{fig:simulation results}
\end{figure}

In order to analyse robustness, we randomly add salt and pepper noise to the event stream with noise-to-event (N/E) ratios between 0 and 0.4 (Example objective functions for different N/E ratios have already been illustrated in Figure~\ref{fig:heat map}). Figure~\ref{fig:CM_Error_Noise_w} and \ref{fig:CM_Error_Noise_v} show the error for each noise level again averaged over 1000 experiments. As can be observed, the errors are very similar and behave more or less independently of the amount of added noise. The latter result underlines the high robustness of our approach.

\subsection{Application to real data and comparison against alternatives}

We apply our method to real data collected by a DAVIS346 event camera, which outputs events streams with a maximum time resolution of 1$\mu$s as well as regular frames at a frame rate of 30Hz. Images have a resolution of 346$\times$260. We mount the camera on the front of a XQ-4 Pro robot and let it face downward. The displacement from the non-steering axis to the camera is $s = -0.45$m, and the height difference between camera and ground is $d = 0.23$m. We recorded several motion sequences on a wood grain foam which has highly self-similar texture and poses a challenge to reliably extract and match features. Ground truth is obtained via an Optitrack optical motion tracking system. Our algorithm is working in undistorted coordinates, which is why normalisation and undistortion are computed in advance. The following aspects are evaluated:

\textbf{Different objective functions}:
We test the algorithm with all aforementioned six contrast functions over various types of motions, including a straight line, a circle, and an arbitrarily curved trajectory. Table~\ref{tab:errors} shows the RMS errors of the estimated dynamic parameters, and compares the accuracy of all six alternatives. We furthermore apply two state-of-the-art approaches for regular images, namely the correspondence-less globally optimal feature-based approach (GOVO) from~\cite{ling2020efficient}, as well as the Improved Fourier Mellin Invariant transform~(IFMI) in~\cite{xu2019improved,bulow2009fast}. Even though these alternatives use the same non-holonomic or planar motion models, event-based motion estimation methods significantly outperform the intensity-camera-based alternatives ($L_{\mathrm{SoSAaS}}$ on top, and $L_{\mathrm{SoS}}$ and $L_{\mathrm{Var}}$ also have good performance).
\begin{table}[t]
\footnotesize
\caption{RMS errors for different datasets and methods.}
\centering
\begin{tabular}{|c|c|c|c|c|c|c|}
\hline
\textbf{\ Method\ } 
& \textbf{\begin{tabular}[c]{@{}c@{}}Line\\ w{[}$^{\circ}$/s{]}\end{tabular}} 
& \textbf{\begin{tabular}[c]{@{}c@{}}Line\\ v{[}m/s{]}\end{tabular}} 
& \textbf{\begin{tabular}[c]{@{}c@{}}Circle\\ w{[}$^{\circ}$/s{]}\end{tabular}} 
& \textbf{\begin{tabular}[c]{@{}c@{}}Circle\\ v{[}m/s{]}\end{tabular}} 
& \textbf{\begin{tabular}[c]{@{}c@{}}Curve\\ w{[}$^{\circ}$/s{]}\end{tabular}} 
& \textbf{\begin{tabular}[c]{@{}c@{}}Curve\\ v{[}m/s{]}\end{tabular}} \\ 
\hline
SoE     & 2.4089      & 0.0158     & 2.2121     & 0.0252     & 3.6282 & 0.0263  \\
SoEaS   & 2.4057      & 0.0158    & 2.0178     & 0.0242     & 3.6282 & 0.0263  \\
SoS     & \textbf{0.5127}      &\ \textbf{0.0086}\ \ & 1.0884  & 0.0083    & 3.0091     & 0.0208  \\
SoSA    & 1.9606      & 0.0287           & 4.2496   & 0.0734    & 9.2904    & 0.0727  \\
SoSAaS  & \textbf{0.5175}  & \textbf{0.0086}   &\ \textbf{0.5294}\ \ &\ \textbf{0.0046}\ \ &\ \textbf{0.5546}\ \ &\ \textbf{0.0189}\ \ \\
Var     & \textbf{0.5127}      & \textbf{0.0086}   & 1.0884   & 0.0083    & 3.0091   & 0.0208  \\
IFMI    & 145.3741    & 1.0594     & 8.1092    & 0.0243    & 12.8047    & 0.0192   \\
GOVO   & 6.9705    & 0.2409     & 4.5506   & 0.0642   & 9.8652  & 0.0590         \\ \hline
\end{tabular}
\label{tab:errors}
\end{table}

\textbf{Event-based vs frame-based}:
GOVO~\cite{ling2020efficient} and IFMI~\cite{xu2019improved} are frame-based algorithms specifically designed for planar AGV motion estimation under featureless conditions. Figure~\ref{fig:robot and frame} shows an example frame of the wood grain foam texture, and Figure~\ref{fig:Real_Data} the results obtained for all methods. As can be observed, GOVO finds as little as three corner features for some of the images, thus making it difficult to accurately recover the vehicle displacement despite the globally-optimal correspondence-less nature of the algorithm. Both IFMI and GOVO occasionally lose tracking (especially for linear motion), which leaves our proposed globally-optimal event-based method using $L_{\mathrm{SoSAaS}}$ as the best method.

\begin{figure}[b]
\subfigure
{
\begin{minipage}{0.55\textwidth}
\ \ \includegraphics[width=\textwidth]{./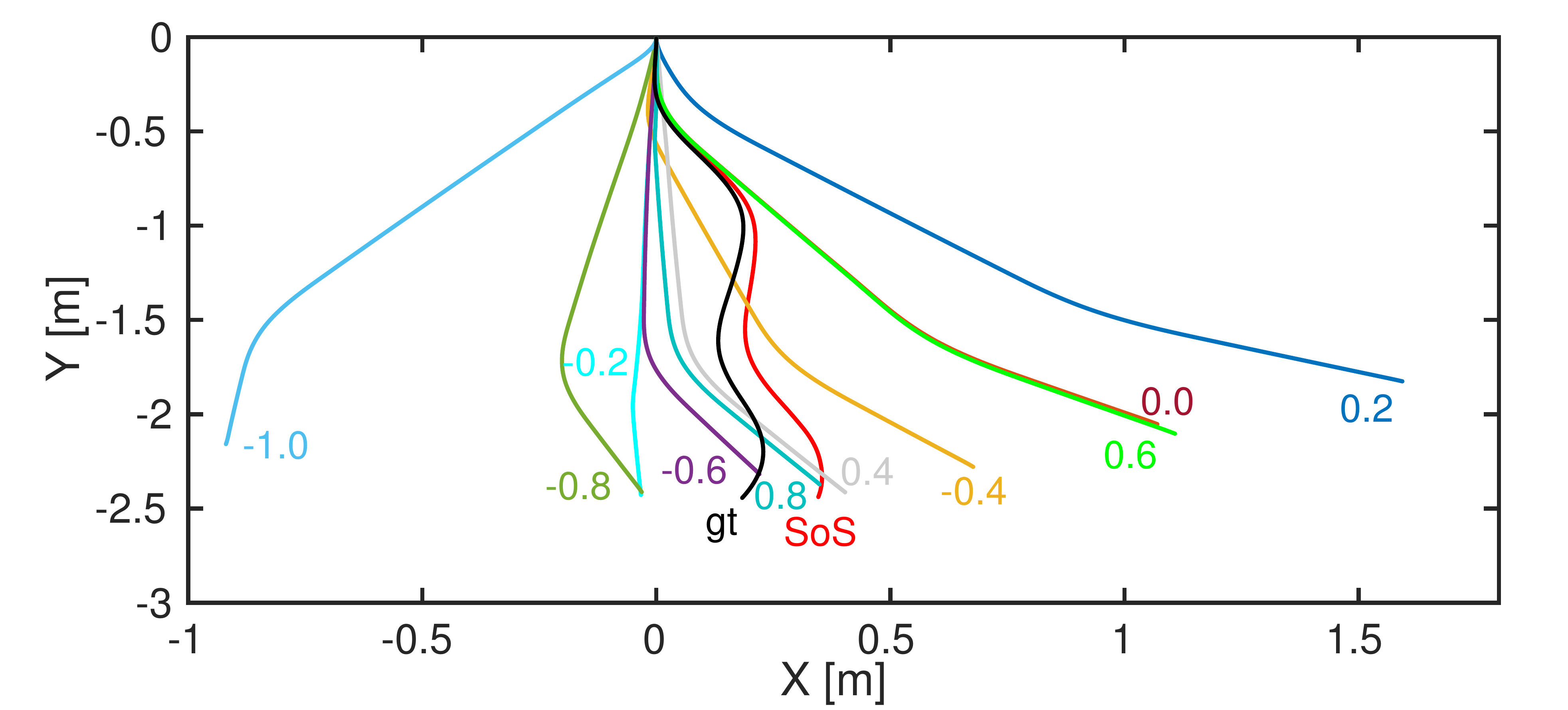}
\end{minipage}
}
\subfigure
{
\begin{minipage}{0.3\textwidth}
\renewcommand\arraystretch{1.2}
\ \ \ \ \ \ \ \ \begin{tabular}{|c|c|c|}
\hline
\bf\ Method\ \ & \bf \begin{tabular}[c]{@{}c@{}}\ w{[}$^{\circ}$/s{]}\end{tabular}\ \ & \bf \begin{tabular}[c]{@{}c@{}}\ v{[}m/s{]}\end{tabular}\ \ \\ \hline
\bf SoS & 3.0091 & 0.0208 \\ \hline
\bf GA & 11.5023 & 0.0379 \\
\hline
\end{tabular}
\end{minipage}
}
\caption{Estimated trajectories by our method (SoS), gradient ascent with various initializations, and ground truth (gt). The table indicates the RMS errors for the best performing gradient ascent run and SoS.}
\label{fig:gd_tra}
\end{figure}
\textbf{BnB vs Gradient Ascent}: We apply both gradient descent as well as BnB to the \textit{Foam} dataset with curved motion. For the first temporal interval and the local search method, we vary the initial angular velocity $\omega$ and linear velocity $v$ between -1 and 0.8 with steps of 0.2 (rad/s or m/s, respectively). For later intervals, we use the previous local optimum. Figure~\ref{fig:gd_tra} illustrates the estimated trajectories for all initial values, compared against ground truth and a BnB search using $L_{\mathrm{SoS}}$. RMS errors are also indicated. As clearly shown, even the best initial guess eventually diverges under a local search strategy, thus leading to clearly inferior results compared to our globally optimal search.
\begin{figure}[t!]
\centering
\includegraphics[width=0.29\textwidth]{./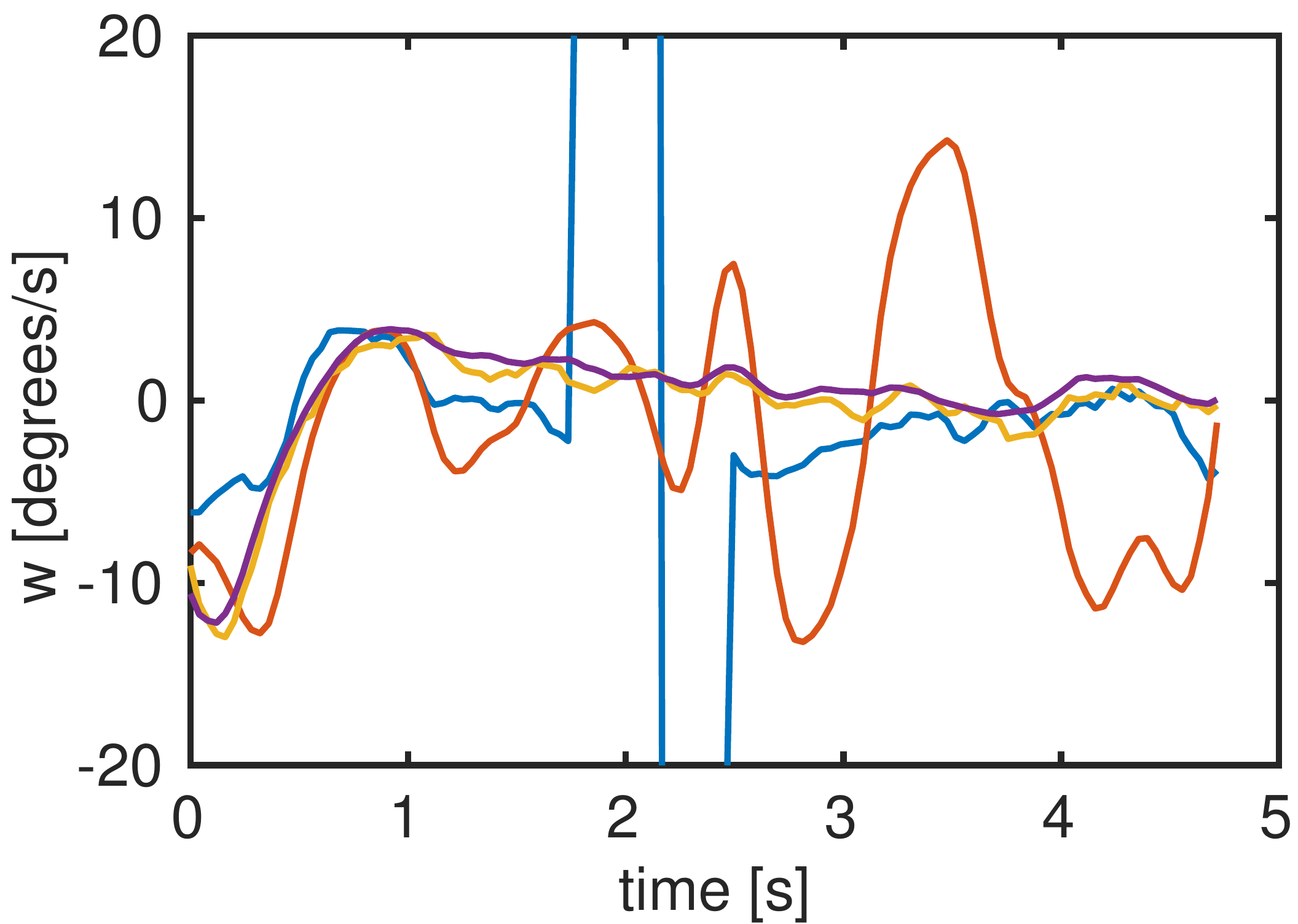}
\includegraphics[width=0.29\textwidth]{./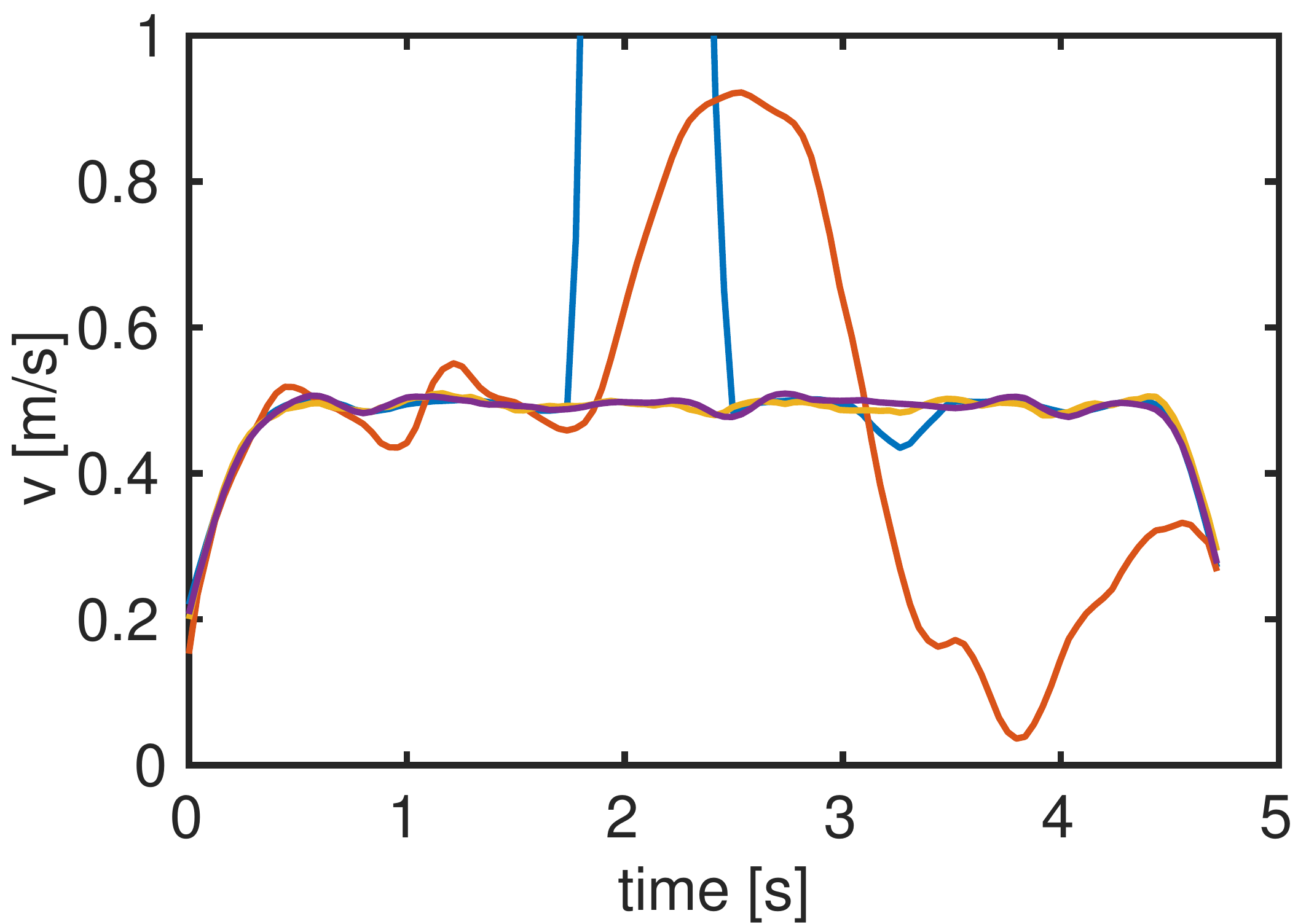}
\includegraphics[width=0.29\textwidth]{./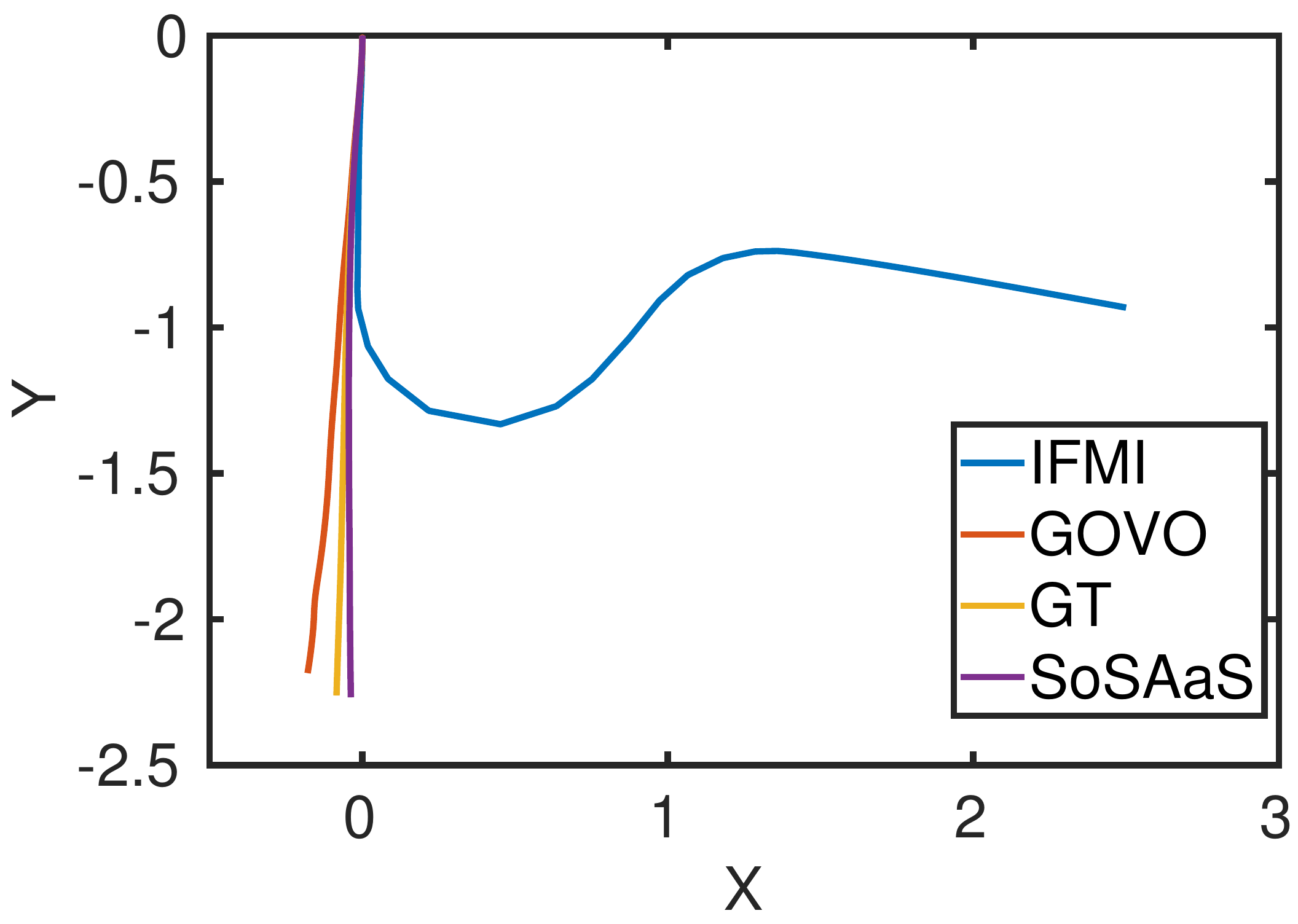} \\
\includegraphics[width=0.29\textwidth]{./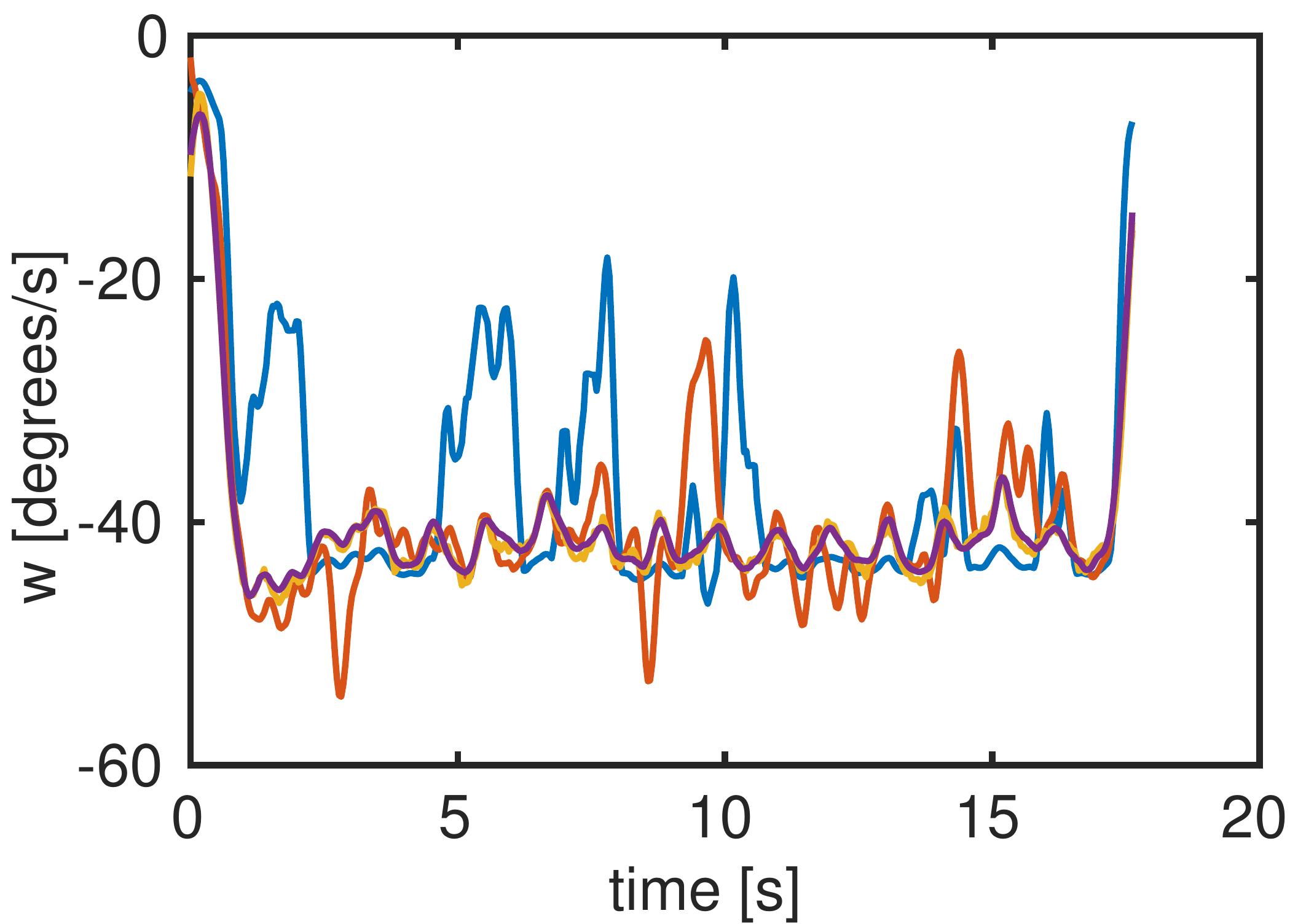}
\includegraphics[width=0.29\textwidth]{./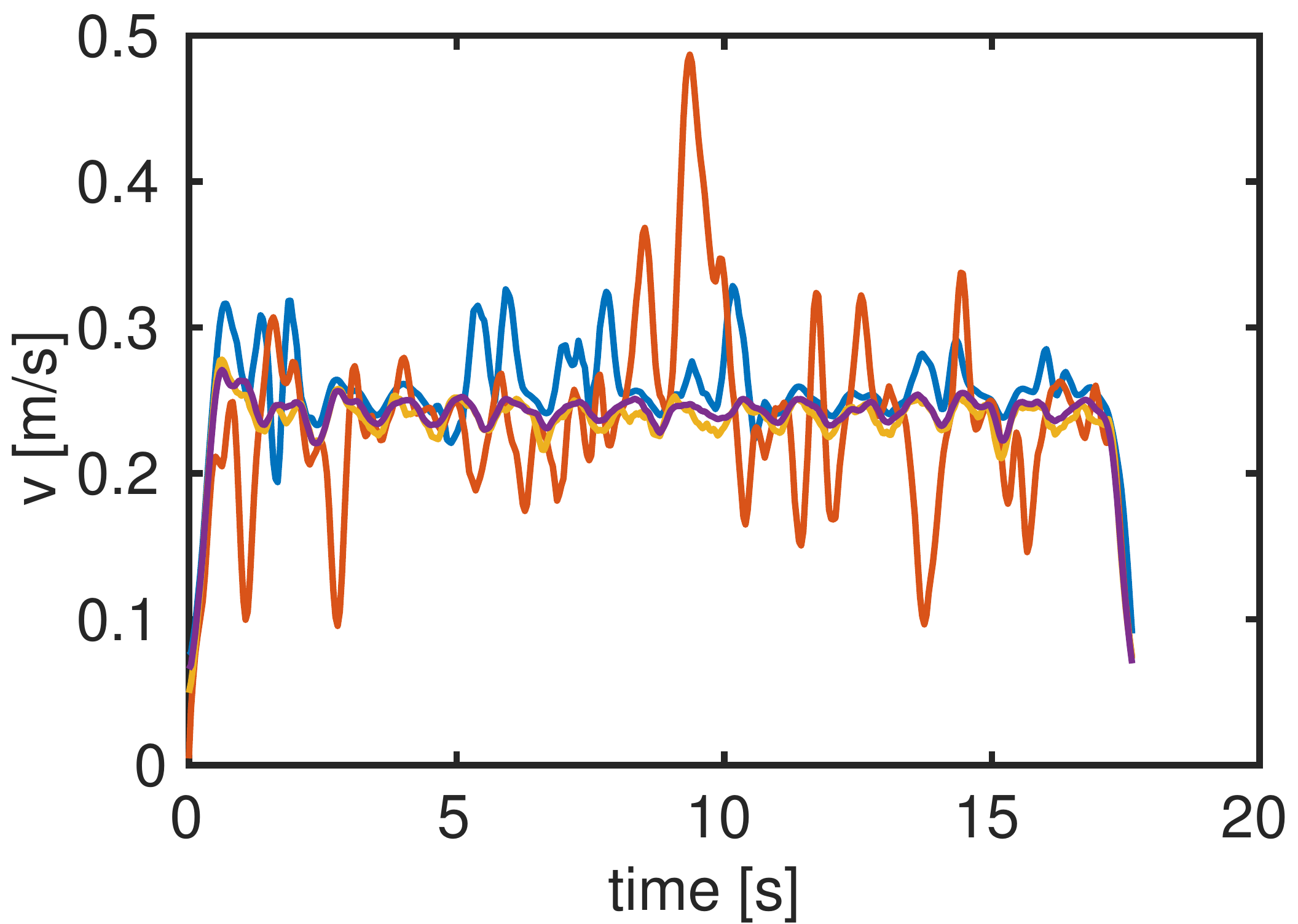}
\includegraphics[width=0.29\textwidth]{./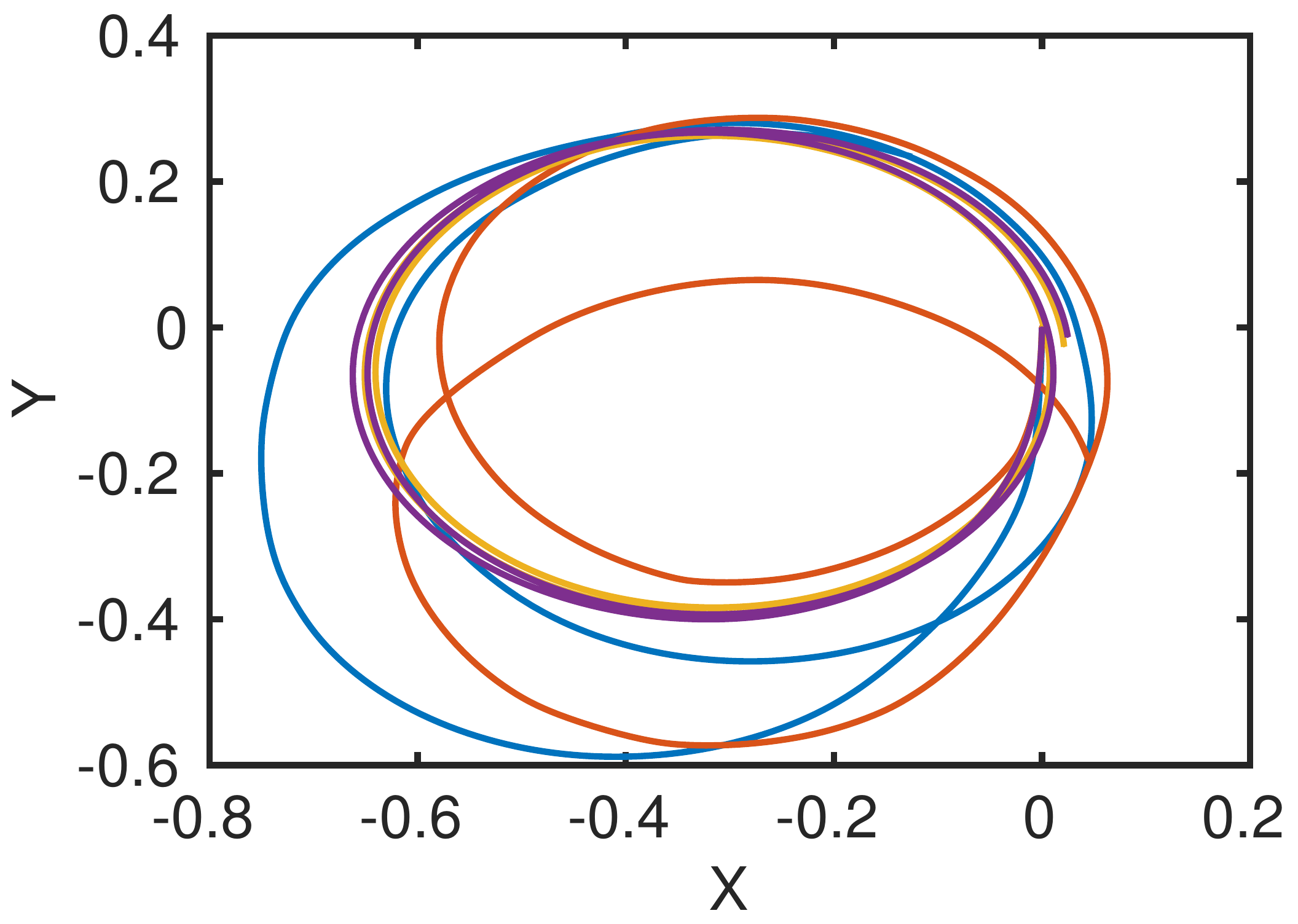} \\
\includegraphics[width=0.29\textwidth]{./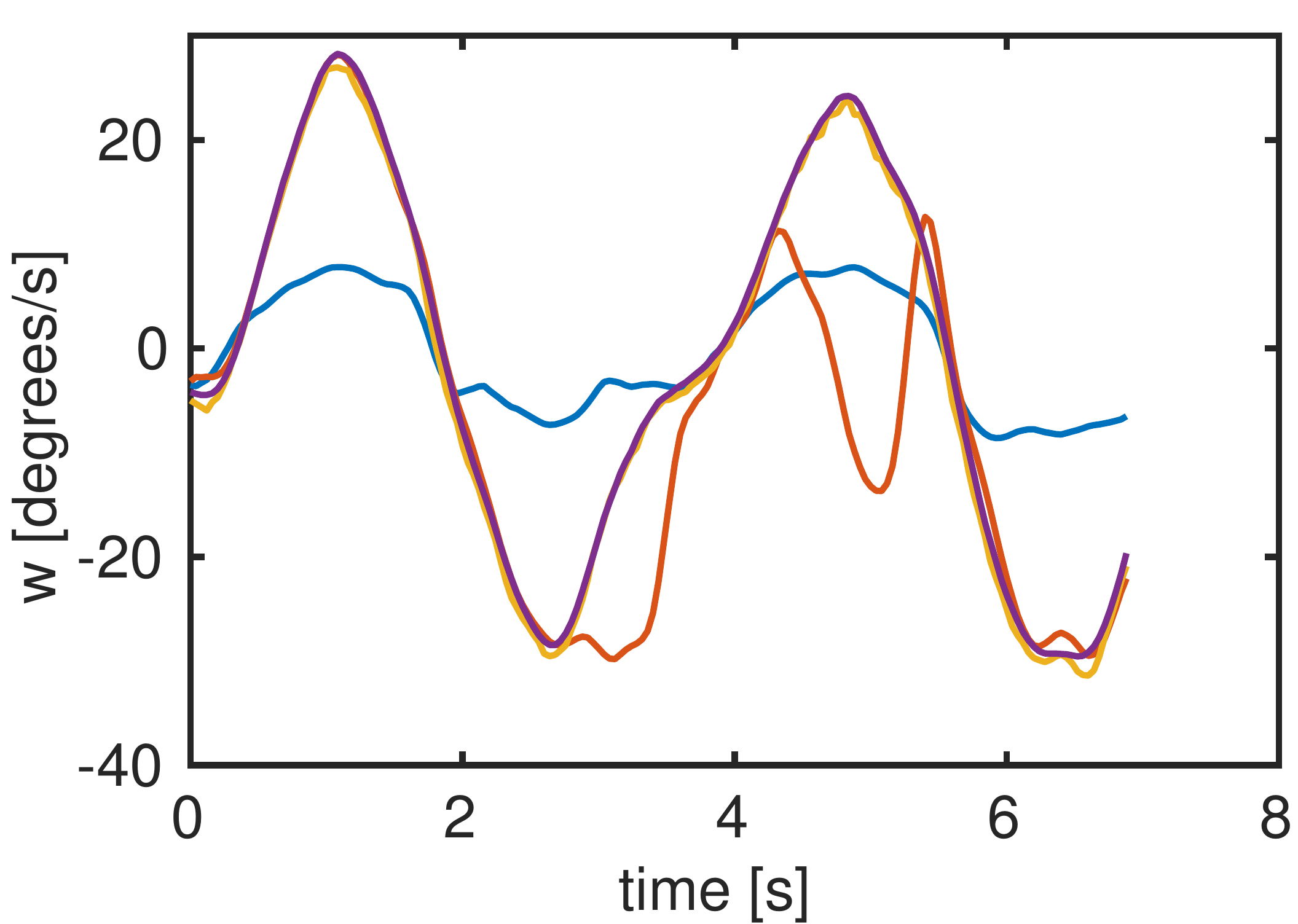}
\includegraphics[width=0.29\textwidth]{./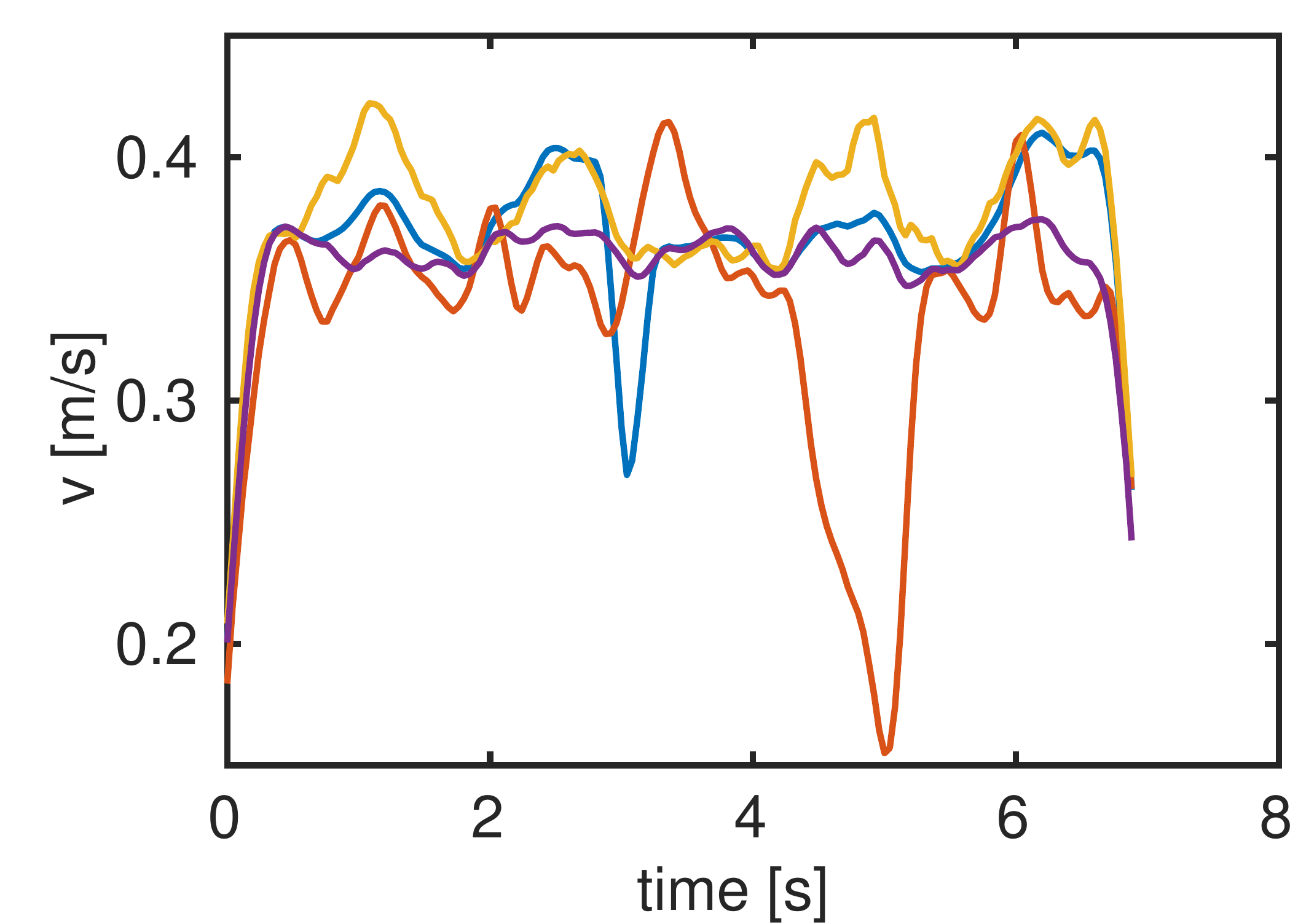}
\includegraphics[width=0.29\textwidth]{./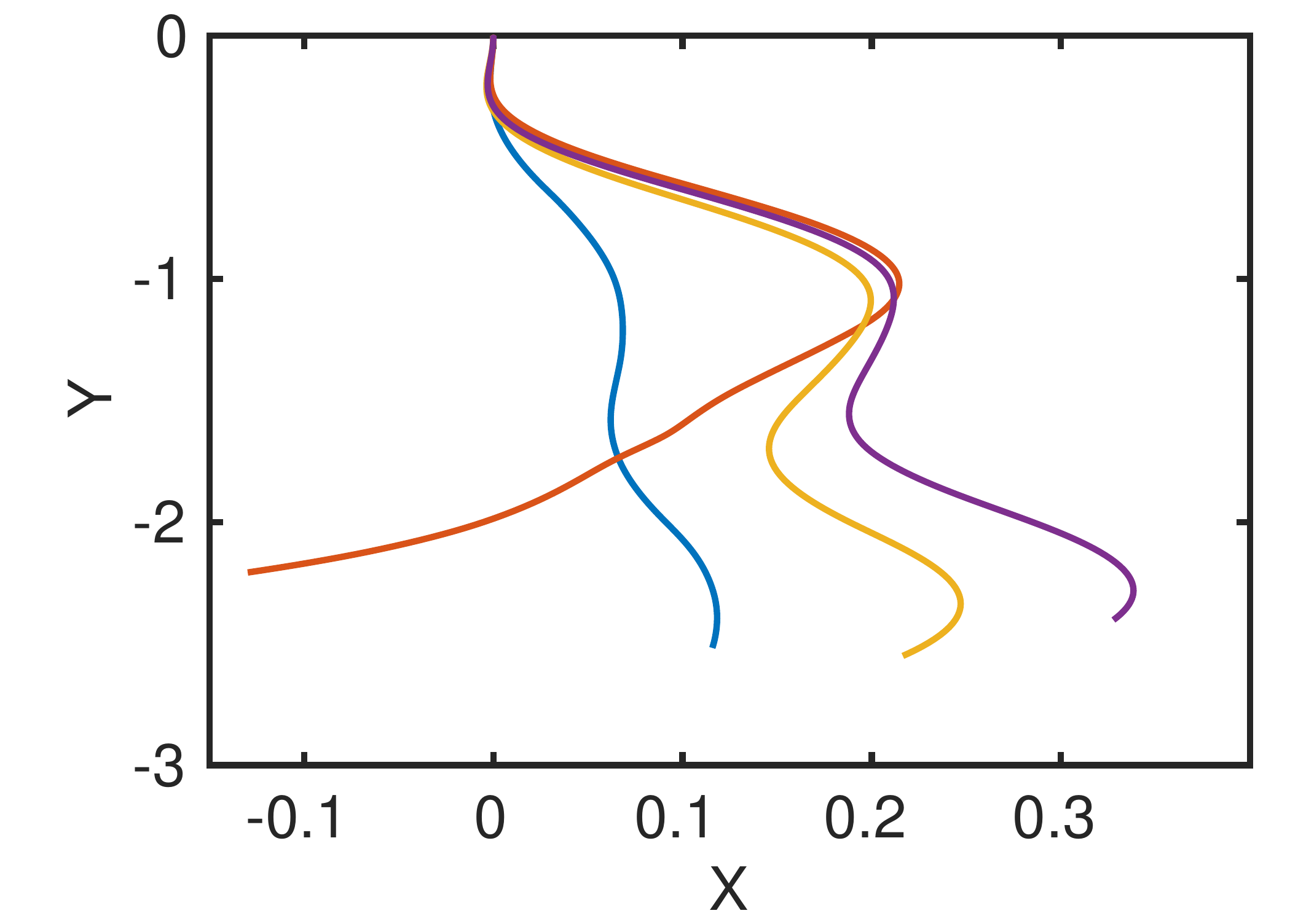} 
\caption{Results for all methods over different datasets. The first two columns are errors over time for $\omega$ and $v$, and the third column illustrates a bird's eye view onto the integrated trajectories.}
\label{fig:Real_Data}
\end{figure}

\textbf{Various textures}: More results over datasets with other ground floor textures can be found in the supplementary material.


\section{Discussion}

We have introduced the first globally optimal solution to contrast maximisation for un-warped event streams. To the best of our knowledge, we are also the first to apply the idea of homography estimation via contrast maximisation to the real-world case of non-holonomic motion estimation with a downward facing camera mounted on an AGV. The challenging conditions in this scenario favorise dynamic vision sensors over regular frame-based cameras, a claim that is supported by our experimental results. The latter furthermore prove that global solutions are important and significantly outperform incremental local refinement. The recursive formulation of our bounds lets us find the global optimum over event streams of 0.04s within less than one minute, a respectable achievement given the typically low computational efficiency of BnB solvers.

\section*{Acknowledgments}
The authors would like to thank the fundings sponsored by Natural Science Foundation of Shanghai (grant number: 19ZR1434000) and Natural Science Foundation of China (grant number: 61950410612).


\end{document}


\pagestyle{headings}
\mainmatter
\def\ECCVSubNumber{5299}  

\title{Globally-Optimal Event Camera\\Motion Estimation\\--Supplementary Material--}


\titlerunning{Globally-Optimal Event Camera Motion Estimation}

\author{}

\authorrunning{X. Peng, Y. Wang, L. Gao, L. Kneip.}

\institute{}


\maketitle


\renewcommand\thesection{\Alph{section}}
\section{Proof of Recursive Upper and Lower Bounds}
\subsection{Variance (Var)}
%
\begin{equation}
    \begin{split}
        l_{N}(\boldsymbol{\theta}) 
        & =  \frac{1}{N_{p}}\sum_{\mathbf{p}_{ij}\in\mathcal{P}}(I(\mathbf{p}_{ij};\boldsymbol{\theta}) - \mu_{I})^2 \\
        & = \frac{1}{N_{p}} \sum_{\mathbf{p}_{ij}\in\mathcal{P}} \left[ \sum_{k=1}^{N} \boldsymbol{1}(\textbf{p}_{ij} -  W(\mathrm{\bold{x}}_k,t_{k};\boldsymbol{\theta}) ) - \mu_I \right]^2  \\
        & = \frac{1}{N_{p}} \sum_{\mathbf{p}_{ij}\in\mathcal{P}} \left[ \sum_{k=1}^{N-1} \boldsymbol{1}(\textbf{p}_{ij} -  W(\mathrm{\bold{x}}_k,t_{k};\boldsymbol{\theta}) ) - \mu_I + \boldsymbol{1}(\textbf{p}_{ij} - W(\mathrm{\bold{x}}_N,t_{N};\boldsymbol{\theta}) ) \right]^2 \\
        & = l_{N-1}(\boldsymbol{\theta}) + a + b + c,
    \end{split}
\end{equation}
%
where $\mu_{I} = N/N_p$ is the mean value of $I(\mathbf{p}_{ij};\boldsymbol{\theta})$ over all pixels (a function of $\boldsymbol{\theta}$ itself), which is constant. $N_{p}$ the total number of accumulators in $I$. And
%
\begin{equation}
    \begin{split}
        & a = \frac{2}{N_{p}} \sum_{\mathbf{p}_{ij}\in\mathcal{P}} \left\{ \boldsymbol{1}(\textbf{p}_{ij} - W(\mathrm{\bold{x}}_N,t_{N};\boldsymbol{\theta}) ) \left[ \sum_{k=1}^{N-1} \boldsymbol{1}(\textbf{p}_{ij} -  W(\mathrm{\bold{x}}_k,t_{k};\boldsymbol{\theta}) ) \right]  \right\} \\
        & b = - \frac{2  \mu_I}{N_{p}} \sum_{\mathbf{p}_{ij}\in\mathcal{P}}  \boldsymbol{1}(\textbf{p}_{ij} - W(\mathrm{\bold{x}}_N,t_{N};\boldsymbol{\theta}) ) = - \frac{2  \mu_I}{N_{p}}  \\
        & c = \frac{1}{N_{p}} \sum_{\mathbf{p}_{ij}\in\mathcal{P}} \left[ \boldsymbol{1}(\textbf{p}_{ij} - W(\mathrm{\bold{x}}_N,t_{N};\boldsymbol{\theta}) ) \right]^2 = \frac{1}{N_p}.
    \end{split}
\end{equation}
%
Thus, similar to the proof in the paper, for the objective function $L_{N} = \max_{\boldsymbol{\theta} \in \boldsymbol{\Theta}} l_N$, we have
%
\begin{equation}
    \begin{split}
        & \overline{L_{N}} = \overline{L_{N-1}} + \frac{1}{N_{p}} - \frac{2 \mu_{I}}{N_{p}} + \frac{2}{N_{p}} Q ,  \\
        & \underline{L_{N}} = \underline{L_{N-1}} + \frac{1}{N_{p}} - \frac{2 \mu_{I}}{N_{p}} + \frac{2}{N_{p}}I^{N-1} (\boldsymbol{\eta}_N^{\theta_0};\boldsymbol{\theta}_0) .
    \end{split}
\end{equation}

\subsection{Sum of Exponentials (SoE)}

\begin{equation}
    \begin{split}
        l_{N}(\boldsymbol{\theta}) 
        & = \sum_{\mathbf{p}_{ij}\in\mathcal{P}}e^{I(\mathbf{p}_{ij};\boldsymbol{\theta})} \\
        & = \sum_{\mathbf{p}_{ij}\in\mathcal{P}}e^{\sum_{k=1}^{N} \boldsymbol{1}(\textbf{p}_{ij} -  W(\mathrm{\bold{x}}_k,t_{k};\boldsymbol{\theta}) )} \\
        & = \sum_{\mathbf{p}_{ij}\in\mathcal{P}}e^{\sum_{k=1}^{N-1} \boldsymbol{1}(\textbf{p}_{ij} -  W(\mathrm{\bold{x}}_k,t_{k};\boldsymbol{\theta}) ) + \boldsymbol{1}(\textbf{p}_{ij} - W(\mathrm{\bold{x}}_N,t_{N};\boldsymbol{\theta}) ) } \\
        & = \sum_{\mathbf{p}_{ij}\in\mathcal{P}}e^{\sum_{k=1}^{N-1} \boldsymbol{1}(\textbf{p}_{ij} -  W(\mathrm{\bold{x}}_k,t_{k};\boldsymbol{\theta}) )} e^{ \boldsymbol{1}(\textbf{p}_{ij} - W(\mathrm{\bold{x}}_N,t_{N};\boldsymbol{\theta}) ) } \\
        & = l_{N-1}(\boldsymbol{\theta}) + a + b,
    \end{split}
\end{equation}
%
where
%
\begin{equation}
    \begin{split} 
        & a = e^{\sum_{k=1}^{N-1} \boldsymbol{1}(\boldsymbol{\eta}_N^{\theta} -  W(\mathrm{\bold{x}}_k,t_{k};\boldsymbol{\theta}) )} e^{ \boldsymbol{1}(\boldsymbol{\eta}_N^{\theta} - W(\mathrm{\bold{x}}_N,t_{N};\boldsymbol{\theta}) ) } = e \cdot e^{I^{N-1}(\boldsymbol{\eta}_N^{\theta};\boldsymbol{\theta})}  \\
        & b = - e^{\sum_{k=1}^{N-1} \boldsymbol{1}(\boldsymbol{\eta}_N^{\theta} -  W(\mathrm{\bold{x}}_k,t_{k};\boldsymbol{\theta}) )} = -e^{I^{N-1}(\boldsymbol{\eta}_N^{\theta_0};\boldsymbol{\theta})}\\
        & \boldsymbol{\eta}_N^{\theta} = \operatorname{round}(W(\mathrm{\bold{x}}_{N},t_{N};\boldsymbol{\theta})).
    \end{split}
\end{equation}
%
Thus, the bounds are
%
\begin{equation}
    \begin{split} 
        & \overline{L_{N}} = \overline{L_{N-1}} + (e-1) e^{Q}  \\
        & \underline{L_{N}} = \underline{L_{N-1}} + (e-1) e^{I^{N-1}(\boldsymbol{\eta}_N^{\theta_0};\boldsymbol{\theta}_0)}.
    \end{split}
\end{equation}

\subsection{Sum of Suppressed Accumulations (SoSA)}

\begin{equation}
	\begin{split}
        l_{N}(\boldsymbol{\theta}) 
        & = \sum_{\mathbf{p}_{ij}\in\mathcal{P}}e^{-I(\mathbf{p}_{ij};\boldsymbol{\theta})\cdot\delta} \\
        & = \sum_{\mathbf{p}_{ij}\in\mathcal{P}}e^{ -\delta \cdot \sum_{k=1}^{N} \boldsymbol{1}(\textbf{p}_{ij} -  W(\mathrm{\bold{x}}_k,t_{k};\boldsymbol{\theta}) ) } \\
        & = \sum_{\mathbf{p}_{ij}\in\mathcal{P}}e^{-\delta \cdot \sum_{k=1}^{N-1} \boldsymbol{1}(\textbf{p}_{ij} -  W(\mathrm{\bold{x}}_k,t_{k};\boldsymbol{\theta}) ) -\delta \cdot \boldsymbol{1}(\textbf{p}_{ij} - W(\mathrm{\bold{x}}_N,t_{N};\boldsymbol{\theta}) ) } \\
        & = l_{N-1}(\boldsymbol{\theta}) + a + b,
    \end{split}
\end{equation}
%
where
%
\begin{equation}
    \begin{split} 
        & a = e^{ -\delta \cdot \sum_{k=1}^{N-1} \boldsymbol{1}(\boldsymbol{\eta}_N^{\theta} -  W(\mathrm{\bold{x}}_k,t_{k};\boldsymbol{\theta}) )} e^{  -\delta \cdot \boldsymbol{1}(\boldsymbol{\eta}_N^{\theta} - W(\mathrm{\bold{x}}_N,t_{N};\boldsymbol{\theta}) ) } = e^{ -\delta} \cdot e^{ -\delta \cdot I^{N-1}(\boldsymbol{\eta}_N^{\theta};\boldsymbol{\theta})}  \\
        & b = - e^{ -\delta \cdot \sum_{k=1}^{N-1} \boldsymbol{1}(\boldsymbol{\eta}_N^{\theta} -  W(\mathrm{\bold{x}}_k,t_{k};\boldsymbol{\theta}) )} = -e^{ -\delta \cdot I^{N-1}(\boldsymbol{\eta}_N^{\theta};\boldsymbol{\theta})}.
    \end{split}
\end{equation}
%
Thus, the bounds are
%
\begin{equation}
    \begin{split} 
        & \overline{L_{N}} = \overline{L_{N-1}} + (e^{-\delta}-1) e^{-\delta \cdot Q} \\
        & \underline{L_{N}} = \underline{L_{N-1}} + (e^{-\delta}-1) e^{-\delta \cdot I^{N-1}(\boldsymbol{\eta}_N^{\theta_0};\boldsymbol{\theta}_0)}.
    \end{split}
\end{equation}

SoEaS and SoSAaS are combination loss functions, the bounds are also a combination, so we omit the derivation here. 


\section{Application to Visual Odometry with a downward-facing Event Camera}

\subsection{Bounding Box Definition}

We have
\begin{eqnarray}
  \mathbf{x}'_k 
  & = & \left[ \begin{matrix}
      - [y_k - v_0 + s \frac{f}{d}] \sin(\omega t)
      + [x_k - u_0 - \frac{f}{d} (\frac{v}{\omega})] \cos(\omega t)
      + \frac{f}{d} (\frac{v}{\omega}) + u_0 \\
      [x_k - u_0 - \frac{f}{d} (\frac{v}{\omega})] \sin(\omega t)
      + [y_k - v_0 + s \frac{f}{d}] \cos(\omega t)
      - s \frac{f}{d} + v_0
  \end{matrix} \right] \nonumber \\
  & = & \left[ \begin{matrix}
      a_x + b_x + c_x + u_0 \\
      a_y + b_y + c_y - s \frac{f}{d} + v_0
  \end{matrix} \right],  
\end{eqnarray}

where
%
\begin{eqnarray}
    a_x &=& - [y_k - v_0 + s \frac{f}{d}] \sin(\omega t), \nonumber \\
    b_x  &=& [x_k - u_0 ] \cos(\omega t) , \nonumber \\
    c_x  &=& \frac{f}{d} (\frac{v}{\omega})[1- \cos(\omega t)] , \nonumber \\
    a_y &=& [x_k - u_0 ] \sin(\omega t), \nonumber \\
    b_y &=&  - \frac{f}{d} (\frac{v}{\omega}) \sin(\omega t), \nonumber \\
    c_y &=&  [y_k - v_0 + s \frac{f}{d}] \cos(\omega t).
\end{eqnarray}
%
The bounding box $\mathcal{P}_k^{\boldsymbol{\Theta}}$ is found by bounding the values of $x^{\prime}_{k}$ and $y^{\prime}_{k}$ over the intervals $\omega\in\left[\omega_{\mathrm{min}};\omega_{\mathrm{max}}\right]$ and $v\in\left[v_{\mathrm{min}};v_{\mathrm{max}}\right]$. Here we only consider the case $\mathrm{abs}(\omega t) < \pi/2$. The bounding box is easily achieved if simply considering the monotonicity and different cases. There are 17 cases in total. One case is when $\omega = 0$. Given the Ackermann motion model, we then obtain 
%
\small
\begin{eqnarray}
    \underline{x^{\prime}_{k}} &=& x_k ,\  \overline{x^{\prime}_{k}}  = -x_k , \nonumber \\
    \underline{y^{\prime}_{k}} &=& y_k + \frac{f}{d} v_{\mathrm{min}} t, \  \overline{y^{\prime}_{k}} =  y_k + \frac{f}{d} v_{\mathrm{max}} t.
\end{eqnarray}
\normalsize
%
The other 16 cases are based on the monotonicity of functions. For example, if $\omega_{\mathrm{min}} \geq 0$, $v_{\mathrm{min}} \geq 0$ and $x_k \geq u_0,\ y_k \geq v_0 - s\frac{f}{d}$, the lower bound of $x^{\prime}_{k}$ is
%
\small
\begin{eqnarray}
    \underline{x^{\prime}_{k}} &=& \min_{\omega} a_x + \min_{\omega} b_x + \min_{\omega,v} c_x + u_0 \ \text{, with} \\
    \min_{\omega} a_x & \geq & - [y_k - v_0 + s \frac{f}{d}] \sin(\omega_{\mathrm{max}} t), \nonumber \\
    \min_{\omega} b_x & \geq & [x_k - u_0 ] \cos(\omega_{\mathrm{max}} t) , \nonumber \\
    \min_{\omega,v} c_x & \geq & \frac{f}{d} (\frac{v_{\mathrm{min}}}{\omega_{\mathrm{max}}})[1- \cos(\omega_{\mathrm{max}} t)].
\end{eqnarray}
\normalsize
%
Table~\ref{tab:bounding box cases} lists $\underline{x^{\prime}_{k}}$ and $\underline{y^{\prime}_{k}}$ with $\omega$ and $v$ arguments when the search space is $\omega_{\mathrm{min}} > 0$. Meanwhile $\overline{x^{\prime}_{k}}$ and $\overline{y^{\prime}_{k}}$ are obtained by $\omega_{\mathrm{min}}$ against $\omega_{\mathrm{max}}$, and $v_{\mathrm{min}}$ against $v_{\mathrm{max}}$. The other 8 cases with $\omega_{\mathrm{max}} < 0$ are derived by a similar strategy.
%
\begin{table}[]
\caption{Bounding Box Cases}
\centering
\begin{tabular}{|c|c|c|c|c|c|c|c|c|}
\hline
\multicolumn{3}{|c|}{\multirow{2}{*}{Conditions}}  & \multicolumn{3}{c|}{ $\underline{x^{\prime}_{k}}$}                         & \multicolumn{3}{c|}{ $\underline{y^{\prime}_{k}}$} \\ \cline{4-9} 
\multicolumn{3}{|c|}{}    & $a_x$ & $b_x$   & $c_x$ & $a_y$ & $b_y$ & $c_y$ 
\\ \hline
\multirow{16}*{\ $\omega_{\mathrm{min}} > 0$\ } & \multicolumn{1}{c|}{\multirow{8}{*}{$\ \ v_{\mathrm{min}} \geq 0$\ \ }} & \begin{tabular}[c]{@{}l@{}}$\ x_k \geq u_0$,\\ $\ y_k \geq v_0 - s\frac{f}{d}$\ \ \end{tabular}   &\ $\omega_{\mathrm{max}}\ $     &\ $\omega_{\mathrm{max}}\ $    &\ \begin{tabular}[c]{@{}l@{}}$\omega_{\mathrm{max}}$,\ \ \\ $v_{\mathrm{min}}$\end{tabular}                     &\ $\omega_{\mathrm{min}}$\ \ &\ \begin{tabular}[c]{@{}l@{}}$\omega_{\mathrm{min}}$,\ \ \\ $v_{\mathrm{max}}$\end{tabular}        & \ $\omega_{\mathrm{max}}\ $       \\ \cline{3-9} 
                  & \multicolumn{1}{c|}{}                  & \begin{tabular}[c]{@{}l@{}}$x_k < u_0$,\\ $y_k \geq v_0 - s\frac{f}{d}$\end{tabular}  & $\omega_{\mathrm{max}}$     &                      $\omega_{\mathrm{min}}$ & \begin{tabular}[c]{@{}l@{}} $\omega_{\mathrm{max}}$,\\ $v_{\mathrm{min}}$\end{tabular}                      & $\omega_{\mathrm{max}}$         & \begin{tabular}[c]{@{}l@{}}$\omega_{\mathrm{min}}$,\\ $v_{\mathrm{max}}$\end{tabular}        &  $\omega_{\mathrm{max}}$       \\ \cline{3-9} 
                  & \multicolumn{1}{c|}{}                  & \begin{tabular}[c]{@{}l@{}}$x_k < u_0$,\\ $y_k < v_0 - s\frac{f}{d}$\end{tabular}  &  $\omega_{\mathrm{min}}$    &                      $\omega_{\mathrm{min}}$ &   \begin{tabular}[c]{@{}l@{}}$\omega_{\mathrm{max}}$,\\ $v_{\mathrm{min}}$\end{tabular}                     &  $\omega_{\mathrm{max}}$        &  \begin{tabular}[c]{@{}l@{}}$\omega_{\mathrm{min}}$,\\ $v_{\mathrm{max}}$\end{tabular}                           &  $\omega_{\mathrm{min}}$         \\ \cline{3-9} 
                  & \multicolumn{1}{c|}{}                  & \begin{tabular}[c]{@{}l@{}}$x_k \geq u_0$,\\ $y_k < v_0 - s\frac{f}{d}$\end{tabular}  & $\omega_{\mathrm{min}}$     &  $\omega_{\mathrm{max}}$           &   \begin{tabular}[c]{@{}l@{}}$\omega_{\mathrm{max}}$,\\ $v_{\mathrm{min}}$\end{tabular}                    &$\omega_{\mathrm{min}}$          &  \begin{tabular}[c]{@{}l@{}}$\omega_{\mathrm{min}}$,\\ $v_{\mathrm{max}}$\end{tabular}        & $\omega_{\mathrm{min}}$        \\ \cline{2-9} 
 & { \multirow{8}{*}{$v_{\mathrm{min}} < 0$}} & \begin{tabular}[c]{@{}l@{}}$x_k \geq u_0$,\\ $y_k \geq v_0 - s\frac{f}{d}$\end{tabular}   & $\omega_{\mathrm{max}}$     & $\omega_{\mathrm{max}}$    &  \begin{tabular}[c]{@{}l@{}}$\omega_{\mathrm{min}}$,\\ $v_{\mathrm{min}}$\end{tabular}                     & $\omega_{\mathrm{min}}$      & \begin{tabular}[c]{@{}l@{}}$\omega_{\mathrm{min}}$,\\ $v_{\mathrm{max}}$\end{tabular}        &  $\omega_{\mathrm{max}}$       \\ \cline{3-9} 
                  & \multicolumn{1}{c|}{}                  & \begin{tabular}[c]{@{}l@{}}$x_k < u_0$,\\ $y_k \geq v_0 - s\frac{f}{d}$\end{tabular}  & $\omega_{\mathrm{max}}$     &                      $\omega_{\mathrm{min}}$ & \begin{tabular}[c]{@{}l@{}} $\omega_{\mathrm{min}}$,\\ $v_{\mathrm{min}}$\end{tabular}                      & $\omega_{\mathrm{max}}$         & \begin{tabular}[c]{@{}l@{}}$\omega_{\mathrm{max}}$,\\ $v_{\mathrm{max}}$\end{tabular}        &  $\omega_{\mathrm{max}}$       \\ \cline{3-9} 
                  & \multicolumn{1}{c|}{}                  & \begin{tabular}[c]{@{}l@{}}$x_k < u_0$,\\ $y_k < v_0 - s\frac{f}{d}$\end{tabular}  &  $\omega_{\mathrm{min}}$    &                      $\omega_{\mathrm{min}}$ &   \begin{tabular}[c]{@{}l@{}}$\omega_{\mathrm{min}}$,\\ $v_{\mathrm{min}}$\end{tabular}                     &  $\omega_{\mathrm{max}}$        &  \begin{tabular}[c]{@{}l@{}}$\omega_{\mathrm{max}}$,\\ $v_{\mathrm{max}}$\end{tabular}                           &  $\omega_{\mathrm{min}}$         \\ \cline{3-9} 
                  & \multicolumn{1}{c|}{}                  & \begin{tabular}[c]{@{}l@{}}$x_k \geq u_0$,\\ $y_k < v_0 - s\frac{f}{d}$\end{tabular}  & $\omega_{\mathrm{min}}$     &  $\omega_{\mathrm{max}}$           &   \begin{tabular}[c]{@{}l@{}}$\omega_{\mathrm{min}}$,\\ $v_{\mathrm{min}}$\end{tabular}                    &$\omega_{\mathrm{min}}$          &  \begin{tabular}[c]{@{}l@{}}$\omega_{\mathrm{max}}$,\\ $v_{\mathrm{max}}$\end{tabular}        & $\omega_{\mathrm{min}}$        \\ \hline
\end{tabular}
\label{tab:bounding box cases}
\end{table}

\subsection{Application to Real Data and Comparison against Alternatives}

\textbf{Various Textures}: To further analyse the robustness, we test our algorithm on datasets collected with various textures. Figure~\ref{fig:carpet_and_poster} presents frames from two further datasets named \textit{Carpet} and \textit{Poster}. The \textit{Carpet} sequences are collected on a carpet with non-repetitive almost featureless texture, while the \textit{Poster} sequences are collected on a poster with characters and figures for which it is easy to extract features. The estimated errors are summarised on the left of Figure ~\ref{fig:carpet_and_poster}. As can be observed, our algorithm continues have similar accuracy for the various textures in the datasets.
%
\begin{figure}[t]
\subfigure
{
\begin{minipage}{0.6\textwidth}
\includegraphics[width=0.188\textwidth]{./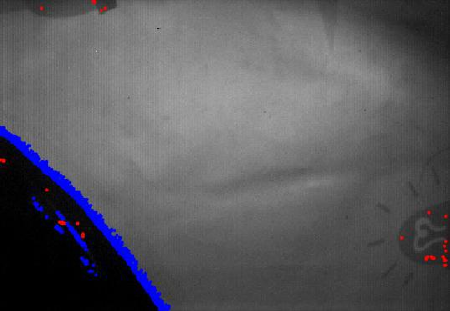}
\includegraphics[width=0.188\textwidth]{./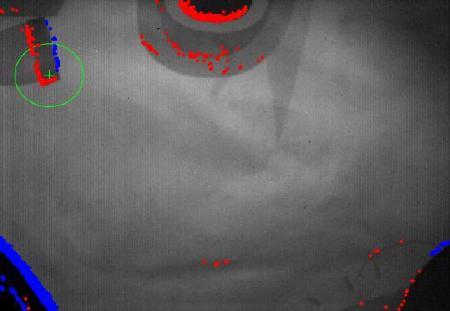}
\includegraphics[width=0.188\textwidth]{./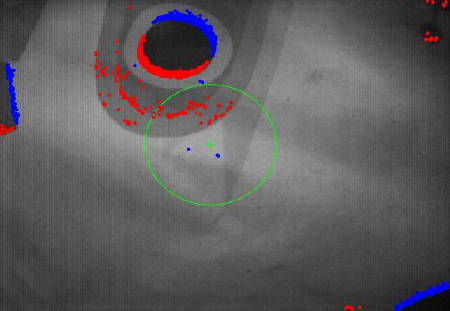}
\includegraphics[width=0.188\textwidth]{./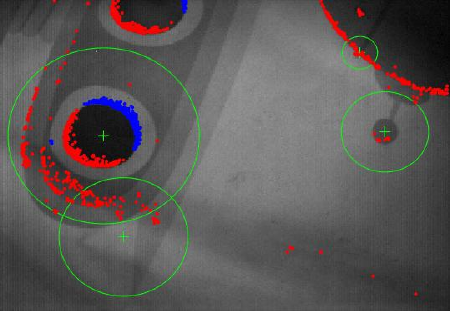}
\includegraphics[width=0.188\textwidth]{./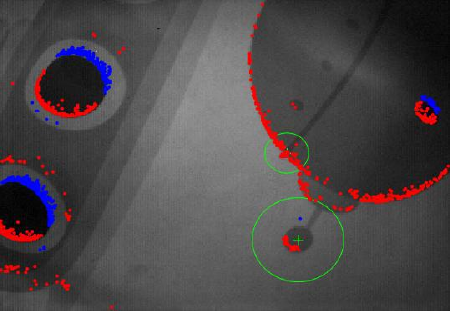} \\
\includegraphics[width=0.188\textwidth]{./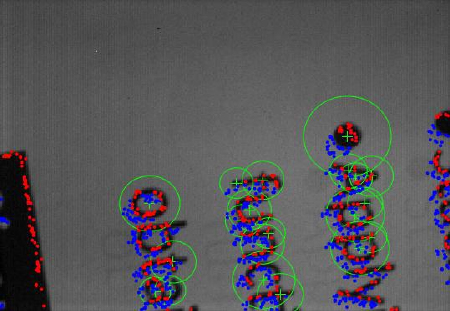}
\includegraphics[width=0.188\textwidth]{./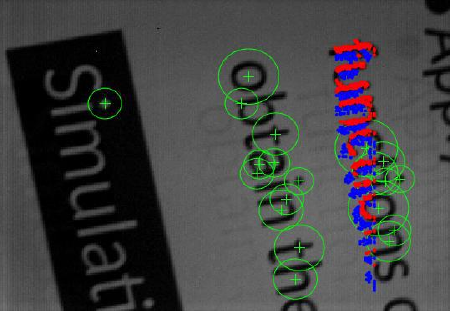}
\includegraphics[width=0.188\textwidth]{./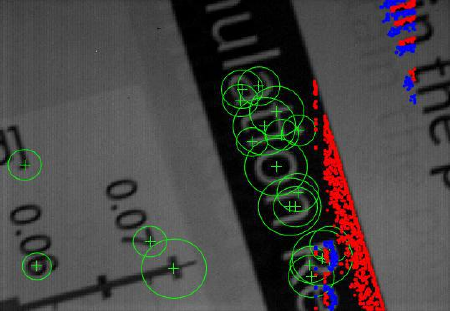}
\includegraphics[width=0.188\textwidth]{./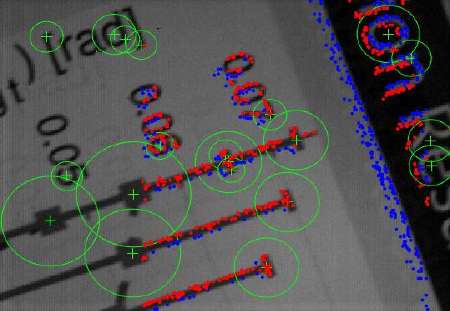}
\includegraphics[width=0.188\textwidth]{./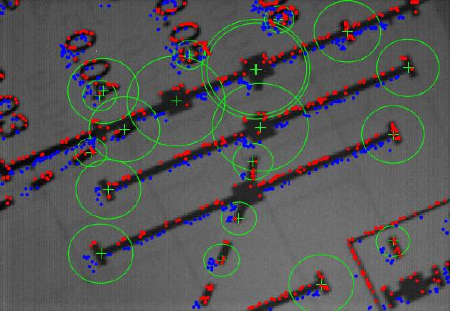}
\end{minipage}
}
\hspace{0\textwidth}
\begin{minipage}{0.20\textwidth}
{
\renewcommand\arraystretch{1.2}
\begin{tabular}{|c|c|c|}
\hline
\bf Scene & \bf\ \begin{tabular}[c]{@{}c@{}} w{[}$^{\circ}$/s{]}\end{tabular}\ \ & \bf\ \begin{tabular}[c]{@{}c@{}} v{[}m/s{]}\end{tabular}\ \ \\ \hline
\bf\ Carpet\ \ & 4.730 & 0.034 \\ \hline
\bf\ Poster\ \ & 3.122 & 0.030 \\
\hline
\end{tabular}
}
\end{minipage}
\caption{Frames from dataset \textit{Carpet} (first row) and \textit{Poster} (second row) and RMS errors for the different textures. }
\label{fig:carpet_and_poster}
\end{figure}
